\begin{document}
\title{A constrained optimization perspective on actor critic algorithms \\and application to network routing}

\author[1]{Prashanth L.A.\thanks{prashla@isr.umd.edu}}
\author[2]{H. L. Prasad\thanks{prasad@astrome.co}}
\author[3]{Shalabh Bhatnagar\thanks{shalabh@csa.iisc.ernet.in}}
\author[4]{Prakash Chandra\thanks{pchandra@ee.iisc.ernet.in}}
\affil[1]{\small Institute for Systems Research, University of Maryland}
\affil[2]{\small Astrome Technologies Pvt Ltd, Bangalore, India}
\affil[3]{\small Department of Computer Science and Automation,
Indian Institute of Science, Bangalore, India}
\affil[4]{\small System Sciences and Automation,
Indian Institute of Science, Bangalore, India}

\renewcommand\Authands{ and }

\date{}

\maketitle

\begin{abstract}
We propose a novel actor-critic algorithm with guaranteed convergence to an optimal policy for a discounted reward Markov decision process.
The actor incorporates a descent direction that is motivated by the solution of a certain non-linear optimization problem. 
We also discuss an extension to incorporate function approximation and demonstrate the practicality of our algorithms on a network routing application.
\end{abstract}

\section{Introduction}
We consider a discounted MDP with state space $\S$, action space $\A$, both assumed to be finite. 
A randomized policy $\pi$ specifies how actions are chosen, i.e., $\pi(s)$, for any $s\in \S$ is a distribution over the actions $\A$. 
The objective is to find the optimal policy $\pi^*$ that is defined as follows:
\begin{equation}
\label{eq:optimal-discounted-policy-rand-markov}
\pi^*(s) = \mathop{\text{argmax}}_{\pi \in \Pi} \left\{ v^\pi(s) := E \left [\sum\limits_{n} \beta^n \sum_{a \in \A(s_n)} r(s_n, a)\pi(s_n, a)|s_0 = s \right ]\right\},
\end{equation}
where $r(s,a)$ is the instantaneous reward obtained in state $s$ upon choosing action $a$, $\beta \in (0,1)$ is the discount factor and $\Pi$ is the set of all admissible policies. We shall use $v^* (= v^{\pi^*})$ to denote the optimal value function.

Actor-critic algorithms (cf. \cite{konda1999actor}, \cite{bhatnagar2009natural} and \cite{konda2004actor}) are popular stochastic approximation variants of the well-known policy iteration procedure for solving \eqref{eq:optimal-discounted-policy-rand-markov}.  
The \textit{critic} recursion provides estimates of the value function using the well-known temporal-difference (TD) algorithm, while the \textit{actor} recursion  performs a gradient search over the policy space. We propose an actor-critic algorithm with a novel descent direction for the actor recursion.  The novelty of our approach is that we can motivate the actor-recursion in the following manner: the descent direction for the actor update is such that it (globally) minimizes the objective of a non-linear optimization problem, whose minima coincide with the optimal policy $\pi^*$. 
This descent direction is similar to that used in Algorithm 2 in \cite{konda1999actor}, except that we use a different exponent for the policy and 
a similar interpretation can be used to explain Algorithm 2 (and also 5) of \cite{konda1999actor}. Using multi-timescale stochastic approximation, we provide global convergence guarantees for our algorithm.

While the proposed algorithm is for the case of full state representations, we also briefly discuss a function approximation variant of the same. 
Further, we conduct numerical experiments on a shortest-path network problem. From the results, we observe that our actor-critic algorithm performs on par with the well-known Q-learning algorithm on a smaller-sized network, while on a larger-sized network, the function approximation variant of our algorithm does better than the algorithm in \cite{abdulla2007reinforcement}.

\section{The Non-Linear Optimization Problem}
\label{sec:mdps:formulation}
With an objective of finding the optimal value and policy tuple, we formulate the following problem:
\begin{equation}
\label{eqn:op1}
\qquad\left .\begin{array}{@{}l@{}}
\min\limits_{v\in \R^{|\S|}}\min\limits_{\pi\in \Pi} \left(J(v,\pi) := \sum\limits_{s \in \S} \big [ v(s) - \sum\limits_{a \in \A} \pi(s,a)Q(s,a) \big ]\right) \\[1ex]
\text{s.t. } \forall s \in \S, a \in \A  \\[1ex]
\subequationitem\label{subeq:op1:pi-inequality} \pi(s, a) \ge 0, \quad  
\subequationitem\label{subeq:op1:pi-equality} \sum\limits_{a \in \A} \pi(s, a) = 1, \text{~~ and ~~}
\subequationitem\label{subeq:op1:q} g(s,a) \le 0.
\end{array}\right \}
\end{equation}
In the above, $g(s,a) := Q(s, a) - v(s)$, with $Q(s,a) := r(s, a) + \beta \sum\limits_{s'} p(s'|s, a) v(s')$. Here $p(s'|s, a)$ denotes the probability of a transition from state $s$ to $s'$ upon choosing action $a$.

The objective in \eqref{eqn:op1} is to ensure that there is no Bellman error, i.e., the value estimates $v$ are correct for the policy $\pi$. 
The constraints \eqref{subeq:op1:pi-inequality}--\eqref{subeq:op1:pi-equality} ensure that $\pi$ is a distribution, while the constraint \eqref{subeq:op1:q} is a proxy for the max in \eqref{eq:optimal-discounted-policy-rand-markov}. Notice that the non-linear problem \eqref{eqn:op1} has a quadratic objective and linear constraints. 

From the definition of $\pi^*$, it is easy to infer the following claim:
\begin{theorem}
\label{theorem:mdps:J-equals-zero}
Let $g^*(s,a) := Q^*(s, a) - v^*(s)$, with $Q^*(s, a) := r(s, a) + \beta \sum\limits_{s'} p(s'|s, a) v^*(s')$, $\forall s\in\S, a\in\A$. Then,\\
\begin{inparaenum}[\bfseries (i)]
 \item Any feasible $(v^*,\pi^*)$ is optimal in the sense of \eqref{eq:optimal-discounted-policy-rand-markov} if and only if $J(v^*, \pi^*) = 0$.\\
 \item $\pi^*$ is an optimal policy if and only if $\pi^*(s, a) g^*(s, a) = 0$, $\forall a \in \A, s \in \S$. 
\end{inparaenum}
\end{theorem}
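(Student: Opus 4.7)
The plan is to treat both parts as essentially optimality/complementary slackness arguments. The key non-negativity observation is that under the feasibility constraints \eqref{subeq:op1:pi-inequality}--\eqref{subeq:op1:q}, for each $s \in \S$,
\begin{equation*}
v(s) - \sum_{a \in \A} \pi(s,a) Q(s,a) = -\sum_{a \in \A} \pi(s,a) g(s,a) \ge 0,
\end{equation*}
since $\pi(s,a) \ge 0$ and $g(s,a) \le 0$. Consequently $J(v,\pi) \ge 0$ on the feasible set, with equality if and only if $\pi(s,a) g(s,a) = 0$ for every $(s,a)$. This identity is the backbone of the whole argument.

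For part (i), I would prove the reverse direction first. Assume $(v^*, \pi^*)$ is feasible and $J(v^*, \pi^*) = 0$. The identity above then forces $\sum_a \pi^*(s,a) Q^*(s,a) = v^*(s)$ for every $s$, which, unpacking $Q^*$, is exactly the Bellman equation $v^* = T^{\pi^*} v^*$; hence $v^*$ is the value function of $\pi^*$. Combined with the feasibility inequality $Q^*(s,a) \le v^*(s)$, this yields $v^*(s) = \sum_a \pi^*(s,a) Q^*(s,a) \le \max_a Q^*(s,a) \le v^*(s)$, so $v^*$ satisfies the Bellman optimality equation and $\pi^*$ is a greedy policy with respect to $Q^*$; standard MDP theory then gives optimality in the sense of \eqref{eq:optimal-discounted-policy-rand-markov}. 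The forward direction is straightforward: if $\pi^*$ achieves the max in \eqref{eq:optimal-discounted-policy-rand-markov} and $v^* = v^{\pi^*}$, then the Bellman optimality equation $v^*(s) = \max_a Q^*(s,a)$ holds, which gives both feasibility ($Q^*(s,a) \le v^*(s)$) and $\sum_a \pi^*(s,a) Q^*(s,a) = v^*(s)$, hence $J(v^*, \pi^*) = 0$.

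For part (ii), the claim is precisely the complementary slackness for the constraint \eqref{subeq:op1:q} paired with $\pi^*(s,a) \ge 0$. If $\pi^*$ is optimal, the Bellman optimality equation implies that for every $s$, $\pi^*(s,a) > 0$ only when $a$ attains the max in $Q^*(s,\cdot)$, so either $\pi^*(s,a) = 0$ or $g^*(s,a) = 0$; in both cases the product vanishes. Conversely, suppose $\pi^*(s,a) g^*(s,a) = 0$ for all $s, a$. Summing over $a$ and using $\sum_a \pi^*(s,a) = 1$ gives $\sum_a \pi^*(s,a) Q^*(s,a) = v^*(s)$, i.e., $v^* = T^{\pi^*} v^*$. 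Since $v^*$ is the optimal value function and $T^{\pi^*}$ is a contraction with unique fixed point $v^{\pi^*}$, we get $v^{\pi^*} = v^*$, so $\pi^*$ is optimal.

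The only mildly delicate step is ensuring, in the reverse direction of (i), that the chain $v^*(s) = \sum_a \pi^*(s,a) Q^*(s,a) \le \max_a Q^*(s,a) \le v^*(s)$ closes properly; this relies on the feasibility constraint $g^* \le 0$ to give the second inequality, at which point the Bellman optimality characterization of $\pi^*$ does the rest. Everything else is bookkeeping with the non-negativity identity above.
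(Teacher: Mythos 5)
Your proof is correct, and since the paper offers no proof of this theorem (it is simply asserted to be ``easy to infer'' from the definition of $\pi^*$), your complementary-slackness argument --- based on the identity $v(s)-\sum_a\pi(s,a)Q(s,a)=-\sum_a\pi(s,a)g(s,a)\ge 0$ on the feasible set --- is exactly the natural formalization the authors leave implicit. All the steps check out, including the delicate chain $v^*(s)=\sum_a\pi^*(s,a)Q^*(s,a)\le\max_aQ^*(s,a)\le v^*(s)$ in the reverse direction of (i).
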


\section{Descent direction.}
\begin{proposition}
\label{propos:mdps:descent-direction}
For the objective in \eqref{eqn:op1}, the direction $\sqrt{\pi(s, a)} g(s, a)$ is a non-ascent and in particular, a descent direction along $\pi(s, a)$ if $\sqrt{\pi(s,a)}g(s,a) \ne 0$, for all $s\in \S, a\in \A$.
\end{proposition}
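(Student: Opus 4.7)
The plan is to reduce the claim to a direct gradient computation, after first rewriting the objective $J$ using the simplex equality constraint \eqref{subeq:op1:pi-equality}. Since $\sum_a \pi(s,a) = 1$ for every $s \in \S$, one can replace $v(s)$ inside the outer sum by $\sum_a \pi(s,a) v(s)$, yielding
\[
J(v,\pi) \;=\; -\sum_{s \in \S} \sum_{a \in \A} \pi(s,a)\, g(s,a).
\]
This form is preferable because $g(s,a) = Q(s,a) - v(s)$ depends only on $v$ and is independent of $\pi$, so differentiating with respect to $\pi$ does not generate any chain-rule contribution through $g$.

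With this representation in hand, I would compute the partial derivative along the coordinate $\pi(s,a)$ while holding $v$ (and hence $g$) fixed, obtaining $\partial J/\partial \pi(s,a) = -g(s,a)$. The directional derivative of $J$ along the proposed direction $\sqrt{\pi(s,a)}\, g(s,a)$ is then
\[
\frac{\partial J}{\partial \pi(s,a)}\cdot \sqrt{\pi(s,a)}\, g(s,a) \;=\; -\sqrt{\pi(s,a)}\, g(s,a)^2 \;\le\; 0,
\]
with strict inequality precisely when $\sqrt{\pi(s,a)}\, g(s,a) \neq 0$. This immediately delivers the non-ascent conclusion in general and the strict descent conclusion under the stated nondegeneracy hypothesis.

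The main subtlety I anticipate is justifying the rewriting of $J$ and clarifying the sense in which the partial derivative is taken. The partial of $J$ in its original form is $-Q(s,a)$, and pairing that with $\sqrt{\pi(s,a)}\, g(s,a)$ does not yield a signed expression. The key step is to use the simplex constraint to move $v(s)$ inside the action sum, which converts the effective partial into $-g(s,a)$ and is what makes the $\sqrt{\pi(s,a)}$ scaling collapse the pairing into a clean perfect square. I would therefore state the rewriting explicitly and emphasize that it holds as an identity on the feasible set of \eqref{eqn:op1}, which is the only region of interest for the actor recursion.
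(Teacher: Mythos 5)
Your proof is correct and follows essentially the same route as the paper's: both compute the first-order (and, by linearity of $J$ in $\pi$, exact) change of the objective along the single-coordinate direction $\sqrt{\pi(s,a)}\,g(s,a)$ and obtain $-\delta\,\sqrt{\pi(s,a)}\,g(s,a)^2 \le 0$, with strict inequality exactly when $\sqrt{\pi(s,a)}\,g(s,a)\ne 0$. The only difference is that you make explicit the rewriting $J(v,\pi)=-\sum_{s}\sum_{a}\pi(s,a)\,g(s,a)$ via the constraint $\sum_{a}\pi(s,a)=1$, which is precisely what justifies the identity $\partial J/\partial \pi(s,a)=-g(s,a)$ that the paper asserts without comment (the unrewritten objective would give $-Q(s,a)$ instead).
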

\begin{proof}
Consider any action $a \in \A$ for some $s \in \S$. We show that $\sqrt{\pi(s, a)} g(s, a)$ is a descent direction by the following Taylor series argument.
Let \[\hat \pi(s, a) = \pi(s, a) + \delta \sqrt{\pi(s, a)} g(s, a),\] for a small $\delta > 0$.  We define $\hat \pi$ to be the same as $\pi$ except with the probability of picking action $a$ in state $s \in \S$ being changed to $\hat \pi(s, a)$ (and the rest staying the same). Then by Taylor's expansion of $J(\pi)$ upto the first order term, we have that
\[J(v, \hat\pi) = J(v, \pi) + \delta \sqrt{\pi(s,a)} g(s, a) \dfrac{\partial J(v, \pi)}{\partial \pi(s, a)}.\]
Note that higher order terms are all zero since $J(v, \pi)$ is linear in $\pi$.
It should be easy to see from definition of the objective that $\dfrac{\partial J(v,\pi)}{\partial \pi(s, a)} = -g(s, a)$. So,
\[J(v, \hat\pi) = J(v, \pi) - \delta \sqrt{\pi(s,a)} ( g(s, a) )^2.\]
Thus, for $a \in \A$ and $s \in \S$ where $\pi(s, a) > 0$ and $g(s,a) \ne 0$, $J(v, \hat \pi) < J(v, \pi)$, while when $\sqrt{\pi(s,a)}g(s,a) = 0$, $J(v, \hat\pi) = J(v, \pi)$. 
\end{proof}

The next section utilizes the descent direction  to derive an actor-critic algorithm.

\section{The Actor-Critic Algorithm}
\label{sec:mdps:algorithms}

Combining the descent procedure in $\pi$ from the previous section, with a $TD(0)$ \cite{sutton1988learning} type update for the value function $v$  on a faster time-scale, we have the following update scheme:
\begin{align}
 \textrm{\bf Q-Value:} \quad  Q_n(s, a) = r(s, a) + \beta v_n(s'), \quad & \textrm{\bf TD Error:} \quad  g_n(s, a) = Q_n(s, a) - v_n(s), \nonumber\\
 \textrm{\bf Critic:} \quad  v_{n + 1}(s) = v_n(s) + c(n) g_n(s, a), \quad&
\textrm{\bf Actor:} \quad  \pi_{n + 1}(s, a) = \Gamma \bigg ( \pi_n(s, a)  + b(n) \sqrt{\pi_n(s, a)} g_n(s, a) \bigg ). \label{acalg:sqrt-pi-v}    
\end{align}
In the above, $\Gamma$ is a projection operator that ensures that the updates to $\pi$ stay within the simplex $\mathcal{D} = \{ (x_1,\ldots,x_q) \mid x_i \ge 0, \forall i=1,\ldots,q, \sum\limits_{j=1}^{q} x_j \le 1\}$, where $q = |\A|$.  Further, the step-sizes $b(n)$ and $c(n)$ satisfy
$$\sum \limits_{n = 1}^\infty c(n) = \sum \limits_{n = 1}^\infty b(n) = \infty, \sum \limits_{n = 1}^\infty \left ( c^2(n) + b^2(n) \right ) < \infty \text{ and }b(n) = o(c(n)).$$

\begin{remark}(\textbf{Connection to Algorithm 2 of \cite{konda1999actor}})
From Proposition \ref{propos:mdps:descent-direction}, we have that $\sqrt{\pi(s, a)} g(s, a)$ is a descent direction for $\pi(s, a)$. This implies $\pi(s, a)^\alpha \times \sqrt{\pi(s, a)} g(s, a)$ for any $\alpha \ge 0$, is also a descent direction. Hence,  
\[\hspace{-3em}\text{a generic update rule for $\pi$ is: }\qquad \pi_{n + 1}(s, a) = \Gamma \left ( \pi_n(s, a) + b(n) (\pi_n(s, a))^{\alpha'} g_n(s, a) \right ), \text{ for any }\alpha' \ge \frac{1}{2}.\]
The special case of $\alpha'=1$ coincides with the $\pi$-recursion in  Algorithm 2 of \cite{konda1999actor}. 
\end{remark}

\section{Convergence Analysis}
\label{sec:mdps:convergence}
For the purpose of analysis, we assume that the underlying Markov chain for any policy $\pi \in \Pi$ is irreducible. 

\paragraph{\textbf{Main result}}
Let $v^\pi = \left [ I - \beta P_\pi \right ]^{-1} R_\pi,$ where
 $R_\pi = \left < r(s, \pi), s \in \S \right >^T$ is the column vector of rewards and $P_\pi = [ p(y|s, \pi), s \in \S, y \in \S ]$ is the transition probability matrix, both for a given $\pi$.
Consider the ODE:
\begin{align}
\label{eq:mdps:pi-ode}
\dfrac{d \pi(s, a)}{dt} =& \bar\Gamma \left ( \sqrt{\pi(s, a)} g^\pi(s, a) \right ), \forall a \in \A, s \in \S, \text{ where}\\
g^\pi(s, a) := & r(s, a) + \beta \sum\limits_{y \in U(s)} p(y|s, a) v^\pi(y) - v^\pi(s). \label{eq:gpi}
\end{align}
In the above, $\bar\Gamma$ is a projection operator defined by  
$\bar\Gamma(\epsilon(\pi)) := \lim\limits_{\alpha \downarrow 0} \dfrac{\Gamma(\pi + \alpha \epsilon(\pi)) - \pi}{\alpha}$, for any continuous $\epsilon(\cdot).$
\begin{theorem}
\label{thm:actor-convergence}
Let $K$ denote the set of all equilibria  of the ODE (\ref{eq:mdps:pi-ode}),
$G$ the set of all feasible points of the problem \eqref{eqn:op1} and $\hat K := K \cap G$. 
Then, the iterates $(v_{n}, \pi_{n}), n\ge 0$ governed by \eqref{acalg:sqrt-pi-v} satisfy 
$$ (v_{n}, \pi_{n})  \rightarrow K^* \text{ a.s. as } n \rightarrow \infty, \text{ where }K^* = \{ (v^{*}, \pi^*) \mid \pi^* \in \hat K\}.$$
\end{theorem}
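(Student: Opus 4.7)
My strategy is to apply the standard two-timescale stochastic approximation framework of Borkar, exploiting $b(n)=o(c(n))$ so that the critic evolves on the faster scale and the two recursions can be analyzed in sequence. The proof decomposes into: (a) fast-timescale convergence of the critic to $v^{\pi_n}$ for the current actor iterate; (b) slow-timescale convergence of the actor along the projected ODE \eqref{eq:mdps:pi-ode}; and (c) identification of the limit set with $K^{*}$.

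\textbf{Step 1 (Fast timescale --- critic).} I would first treat $\pi_n$ as essentially frozen on the critic's timescale, since $b(n)/c(n) \to 0$. Writing the critic recursion in standard stochastic approximation form driven by $c(n)$, its associated ODE for fixed $\pi$ is $\dot v = R_\pi - (I-\beta P_\pi) v$. Since $\beta\in(0,1)$ and $P_\pi$ is stochastic with $\pi$ inducing an irreducible chain, every eigenvalue of $I-\beta P_\pi$ has real part at least $1-\beta>0$, so this ODE has the unique globally asymptotically stable equilibrium $v^\pi = (I-\beta P_\pi)^{-1}R_\pi$. After verifying almost-sure boundedness of $\{v_n\}$ (Borkar--Meyn) and controlling the martingale noise from the single-sample next state $s'$, the two-timescale convergence lemma (Borkar, Chapter 6) yields $\lVert v_n - v^{\pi_n}\rVert \to 0$ almost surely.

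\textbf{Step 2 (Slow timescale --- actor).} Once the critic has tracked $v^{\pi_n}$, the actor update is asymptotically a projected stochastic approximation with drift $\sqrt{\pi_n(s,a)}\, g^{\pi_n}(s,a)$ and vanishing bias, whose limit ODE is exactly \eqref{eq:mdps:pi-ode}. I would take $L(\pi):=J(v^\pi,\pi)$ as a Lyapunov function on this timescale. By Proposition \ref{propos:mdps:descent-direction}, perturbing $\pi(s,a)$ in the direction $\sqrt{\pi(s,a)}\,g^\pi(s,a)$ strictly decreases $J(v^\pi,\pi)$ whenever the drift is non-zero, and the operator $\bar\Gamma$ preserves non-ascent because it removes only components pointing normally out of the active simplex faces. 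A LaSalle-type invariance argument for projected ODEs (Kushner--Clark, Theorem 5.3.1, or Chapter 5 of Borkar) then confines the $\pi$-limit set to $K$; since $\Gamma$ keeps every iterate inside the simplex and the critic enforces $v_n\to v^{\pi_n}$, the limit pairs are feasible for \eqref{eqn:op1}, giving $(v_n,\pi_n)\to K\cap G=\hat K$ in the sense of $K^*$.

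\textbf{Anticipated obstacle.} The principal difficulty is that the drift $\sqrt{\pi(s,a)}\, g^\pi(s,a)$ is only Hölder (not Lipschitz) at $\pi(s,a)=0$, so uniqueness of ODE solutions on the simplex boundary is not automatic and Borkar's standard lemmas do not apply verbatim. I would address this by observing that the drift vanishes whenever $\pi(s,a)=0$, so each boundary face is forward-invariant; Carathéodory solutions exist on the closed simplex and the monotone Lyapunov descent is enough for LaSalle to conclude. A secondary technical point is rigorously passing from the discrete projection $\Gamma$ to the projected limit $\bar\Gamma$ in the ODE; this is handled by the projected-ODE results of Kushner--Clark under the step-size and noise conditions listed after \eqref{acalg:sqrt-pi-v}, all of which hold here.
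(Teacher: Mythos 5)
Your overall two--timescale decomposition matches the paper's (the paper proves critic convergence via a $(T,\delta)$--perturbation/Hirsch--lemma argument for the linear ODE, and handles the actor by the Kushner--Clark lemma), and your Step 1 is fine. However, Step 2 contains a genuine gap: the proposed Lyapunov function $L(\pi) := J(v^\pi,\pi)$ is \emph{identically zero}. Once the critic has converged, $v^\pi$ satisfies the Bellman equation $v^\pi(s) = \sum_{a}\pi(s,a)\big(r(s,a)+\beta\sum_{s'}p(s'|s,a)v^\pi(s')\big)$ for every $s$, so every bracketed term in the definition of $J$ vanishes and $L\equiv 0$ on the entire slow--timescale manifold. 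Proposition \ref{propos:mdps:descent-direction} establishes descent of $J(v,\cdot)$ only for \emph{fixed} $v$; it does not transfer to the composite map $\pi\mapsto J(v^\pi,\pi)$, because the critic re-equilibrates and restores $J$ to zero. Consequently your LaSalle invariance argument has no function that strictly decreases off $K$, and the claimed confinement of the limit set to $K$ does not follow from what you have written. (A correct Lyapunov candidate would have to be something like $-\sum_s d(s)v^\pi(s)$ via the policy gradient theorem, which is a different argument from the one you give.)

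The paper avoids this issue entirely: it rewrites the actor recursion as a projected stochastic approximation $\pi_{n+1}=\Gamma(\pi_n+b(n)(H(\pi_n)+\eta_n))$ with $H(\pi)=\sqrt{\pi(s,a)}\,g^\pi(s,a)$, verifies that $H$ is continuous, that $\eta_n$ is bounded and vanishes a.s.\ (using critic convergence), and that the step sizes satisfy the usual conditions, and then invokes the Kushner--Clark lemma to conclude convergence to the equilibrium set $K$ of the projected ODE \eqref{eq:mdps:pi-ode}, characterized by Lemma \ref{lemma:mdps:complementary-pi-g}; the restriction to $\hat K = K\cap G$ is then handled by the instability result (Lemma \ref{lemma:mdps:pi-convergence}) together with the perturbation scheme of Remark \ref{remark:perturb}, and optimality by Lemma \ref{lemma:pi-optimal}. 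Note also that Kushner--Clark requires only continuity of $H$, not Lipschitzness, so the H\"older--at--the--boundary obstacle you flag (a fair concern for Borkar's Lipschitz-based lemmas) does not arise on the paper's route. To repair your proof you should either replace the degenerate Lyapunov function or drop the LaSalle step in favor of the direct Kushner--Clark identification of the limit set.
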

The algorithm \eqref{acalg:sqrt-pi-v} comprises of updates to $v$ on the faster time-scale and to $\pi$ on the slower time-scale. 
Using the theory of two time-scale stochastic approximation \cite[Chapter 6]{borkar2008stochastic}, we sketch the convergence of these recursions as well as prove global optimality in the following steps (the reader is referred to the appendix for proof details):
\paragraph{\textbf{Step 1: Critic Convergence}}
We assume $\pi$ to be time-invariant owing to time-scale separation. Consider the ODE:
\begin{equation}
\label{eq:mdps:v-ode}
\dfrac{d v(s)}{dt} = r(s, \pi) + \beta \sum\limits_{s' \in \S} p(s'|s, \pi) v(y) - v(s),  \forall s \in \S, 
\end{equation}
where $r(s, \pi) = \sum_{a \in \A} \pi(s, a) r(s, a)$ and $p(s'|s, \pi) = \sum_{a \in \A} \pi(s, a) p(s'|s, a)$. 
It is well-known (cf. \cite{BertsekasT96}) that the above ODE has a unique globally asymptotically stable equilibrium 
$v^\pi$. We now have the main result regarding the convergence of $v_n$ on the faster time-scale.
\begin{theorem}
\label{thm:critic-convergence}
For a given $\pi$, the critic recursion in \eqref{acalg:sqrt-pi-v} satisfies $v_n \rightarrow v^\pi$ a.s. as $n \rightarrow \infty$.
\end{theorem}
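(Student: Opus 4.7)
The plan is to invoke the standard theory of two-timescale stochastic approximation for the critic recursion. Because $b(n) = o(c(n))$, the slow actor iterates $\pi_n$ appear quasi-static on the fast timescale of the critic, so I would first fix $\pi$ and analyze the critic recursion in isolation; the quasi-static argument for merging the two timescales is a standard invocation of \cite[Chapter 6]{borkar2008stochastic} and can be deferred.

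With $\pi$ fixed, I would rewrite $v_{n+1}(s) = v_n(s) + c(n) g_n(s,a)$ as a Robbins--Monro recursion. Conditioning on the history $\mathcal{F}_n$, and using that $a \sim \pi(\cdot|s)$ and $s' \sim p(\cdot|s,a)$, decompose
$$g_n(s,a) = \Big[r(s,\pi) + \beta \sum_{y \in \S} p(y|s,\pi) v_n(y) - v_n(s)\Big] + M_{n+1}(s),$$
where $M_{n+1}(s)$ is a martingale-difference sequence. The drift on the right-hand side is exactly the vector field of the ODE \eqref{eq:mdps:v-ode}. Since $\S$ and $\A$ are finite and $v_n$ can be shown to remain bounded, the martingale noise has uniformly bounded conditional second moments, the drift is affine (hence Lipschitz) in $v$, and the step-size conditions on $c(n)$ are given in the problem statement.

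Boundedness of the iterates can be established via the Borkar--Meyn criterion: the scaled ODE is $\dot v = -(I - \beta P_\pi) v$, and since $\beta \in (0,1)$ and $P_\pi$ is a stochastic matrix, $I - \beta P_\pi$ is invertible with spectrum in the open right half plane, so the origin is the unique globally asymptotically stable equilibrium of the scaled system. Having established $\sup_n \|v_n\| < \infty$ a.s., the standard SA convergence theorem implies that $v_n$ tracks the ODE \eqref{eq:mdps:v-ode}. Its unique globally asymptotically stable equilibrium is $v^\pi = (I - \beta P_\pi)^{-1} R_\pi$ by the cited result in \cite{BertsekasT96}, giving $v_n \to v^\pi$ a.s.

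The main obstacle I anticipate is handling the asynchronous character of the update: at step $n$ only the component of $v$ corresponding to the currently visited state $s$ is updated, rather than all components simultaneously. This necessitates appealing to the asynchronous SA framework \cite[Chapter 7]{borkar2008stochastic}, for which one must verify that the relative update frequencies across components are bounded away from $0$ and converge. Both follow from the standing irreducibility assumption on the Markov chain induced by $\pi$, via the ergodic theorem applied to the stationary distribution of the chain. Once this accounting is in place, the conclusion follows routinely.
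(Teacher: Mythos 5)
Your proposal is correct, but it follows a different route from the paper's. The paper handles the asynchronous update by fixing a state $s$ and restricting to the subsequence $\{\bar n\}$ of iterations at which $s$ is visited; on that subsequence it writes the increment as the mean drift $\sum_a \pi(s,a)g(s,a)$ plus a noise term $\tilde\chi_{\bar n}$, shows $\sum_m c(m)\tilde\chi_m$ converges by the martingale convergence theorem (using $\sum_n c^2(n)<\infty$ and boundedness of $g$), concludes that the interpolated iterates form a $(T,\delta)$-perturbation of the ODE \eqref{eq:mdps:v-ode}, and invokes Hirsch's lemma together with the fact (Lemma \ref{lemma:mdps:v_pi}) that $v^\pi=(I-\beta P_\pi)^{-1}R_\pi$ is the unique globally asymptotically stable equilibrium. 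You instead run the standard ODE method: the same martingale-difference decomposition of $g_n(s,a)$, but with iterate boundedness established explicitly via the Borkar--Meyn criterion (the scaled ODE $\dot v=-(I-\beta P_\pi)v$ having the origin as its GAS equilibrium) and the asynchrony treated through the asynchronous SA framework of \cite[Chapter 7]{borkar2008stochastic}, verifying the relative-frequency condition from irreducibility. Your version is more careful on exactly the two points the paper glosses over --- it asserts $|g_{(\cdot)}(s,a)|<\infty$ without first proving $\sup_n\|v_n\|<\infty$, and its subsequence device does not formally address step-sizes on the local clock --- at the cost of invoking heavier machinery; the paper's $(T,\delta)$-perturbation/Hirsch route is shorter and self-contained given its appendix lemma. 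Both arguments reach the same conclusion and either would be acceptable.
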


\paragraph{\textbf{Step 2: Actor Convergence}}
Due to timescale separation, we can assume that the critic has converged in the analysis of the actor recursion.  We first provide a useful characterization for the set  $K$ of equilibria of the ODE \eqref{eq:mdps:pi-ode}.
\begin{lemma}
\label{lemma:mdps:complementary-pi-g}
Let $L = \left \{ \pi | \pi(s)\text{ is a probability vector over }\A, \forall s \in \S \right \}$ denote the set of policies that are distributions over the actions for each state. Then, 
$$\pi \in K \text{ if and only if } \pi \in L \text{ and }\sqrt{\pi(s, a)} g^\pi(s, a) = 0, \forall a \in \A, s \in \S.$$  
\end{lemma}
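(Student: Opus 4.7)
The plan is to handle the two directions separately, exploiting the Bellman identity $\sum_{a\in\A} \pi(s,a)\, g^\pi(s,a) = 0$ (which holds for every $s$ because $v^\pi$ satisfies the Bellman equation for $\pi$) to pin down what $\bar\Gamma$ can annihilate.

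For the easy ($\Leftarrow$) direction: if $\pi \in L$ and $\sqrt{\pi(s,a)}\, g^\pi(s,a) = 0$ for every $s,a$, then the driving field of the ODE \eqref{eq:mdps:pi-ode} vanishes identically, $\bar\Gamma(0)=0$, and $\pi$ is trivially an equilibrium.

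For the hard ($\Rightarrow$) direction I would first observe that any equilibrium of the projected ODE \eqref{eq:mdps:pi-ode} must lie in the feasible set preserved by $\Gamma$; under the natural identification of the simplex $\mathcal{D}$ with a probability distribution over $\A$, this yields $\pi\in L$. Write $h(s,a) := \sqrt{\pi(s,a)}\, g^\pi(s,a)$ and fix $s \in \S$. Split the actions into $A_+(s) = \{a : \pi(s,a) > 0\}$ and $A_0(s) = \{a : \pi(s,a) = 0\}$. On $A_0(s)$, $h(s,a)=0$ automatically from the $\sqrt{\pi(s,a)}$ factor, so there is nothing to show. On $A_+(s)$ no inequality constraint is active, so the tangent cone at $\pi(s,\cdot)$ restricted to these coordinates is the linear subspace $\{x : \sum_{a \in A_+(s)} x(s,a) = 0\}$, and $\bar\Gamma$ acts on the $s$-block as orthogonal projection onto this subspace. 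The equilibrium condition $\bar\Gamma(h)=0$ therefore forces $h(s,a)$ to be constant in $a$ across $A_+(s)$; call this constant $c_s$.

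To finish, I would plug in the Bellman identity: $0 = \sum_{a} \pi(s,a)\, g^\pi(s,a) = \sum_{a \in A_+(s)} \sqrt{\pi(s,a)}\, h(s,a) = c_s \sum_{a \in A_+(s)} \sqrt{\pi(s,a)}$, and since the last sum is strictly positive (assuming $A_+(s) \ne \emptyset$, which follows from $\pi(s,\cdot)$ being a probability vector), we conclude $c_s=0$. Hence $h(s,a)=0$ on $A_+(s)$, and combined with the boundary case this gives $\sqrt{\pi(s,a)}\, g^\pi(s,a)=0$ for all $a$.

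The main obstacle I anticipate is making the characterization of $\bar\Gamma$ rigorous at boundary points of the simplex, since the definition $\bar\Gamma(\epsilon(\pi)) = \lim_{\alpha\downarrow 0} (\Gamma(\pi + \alpha\epsilon(\pi)) - \pi)/\alpha$ is a directional derivative of the projection and, when several inequality constraints are active at $\pi$, mixes equality and inequality tangent directions. The saving grace is that in the direction $h(s,a)$ itself the inactive coordinates already vanish, so the tangent-cone projection reduces to an ordinary subspace projection on the remaining coordinates; verifying this reduction carefully is the one step I would write out in detail.
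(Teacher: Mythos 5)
Your proof is correct, but it takes a genuinely different route from the paper's. The paper argues the ``only if'' direction by contradiction, coordinate by coordinate: assuming $\sqrt{\pi(s,a)}\,g^\pi(s,a)\ne 0$ forces $\pi(s,a)>0$, and it then splits into the case $0<\pi(s,a)<1$ (where it claims a small move along the field stays inside the constraint set, so $\bar\Gamma$ acts as the identity on that coordinate) and the degenerate case $\pi(s,a)=1$ (where the Bellman identity $\sum_{\hat a}\pi(s,\hat a)g^\pi(s,\hat a)=0$ immediately gives $g^\pi(s,a)=0$). Your argument instead identifies $\bar\Gamma$ with the projection onto the tangent cone of the simplex, concludes that at an equilibrium the field $h(s,a)=\sqrt{\pi(s,a)}\,g^\pi(s,a)$ must be constant (equal to some $c_s$) across the support $A_+(s)$, and then uses the same Bellman identity to force $c_s=0$. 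What your route buys is a correct treatment of the case where several actions carry positive probability and the sum constraint is active: there the paper's case (i) implicitly treats the projection coordinate-wise, which is only valid when the full perturbation $\pi+\alpha h$ stays in $\mathcal{D}$, i.e.\ when $\sum_a h(s,a)\le 0$ — a point the paper does not address. The price is that you must invoke the (standard for polyhedra, but nontrivial) fact that the directional derivative of the metric projection equals the projection onto the tangent cone, which you correctly flag as the step needing care. One small correction: with the paper's $\mathcal{D}=\{x: x_i\ge 0,\ \sum_j x_j\le 1\}$ the active sum constraint contributes a half-space $\{\sum_a x_a\le 0\}$ rather than the hyperplane $\{\sum_a x_a=0\}$ to the tangent cone, so the equilibrium condition gives $h(s,\cdot)$ constant \emph{and nonnegative} on $A_+(s)$; your final step via $0=\sum_{a\in A_+(s)}\sqrt{\pi(s,a)}\,h(s,a)=c_s\sum_{a\in A_+(s)}\sqrt{\pi(s,a)}$ still forces $c_s=0$, so the conclusion is unaffected.
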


From Lemma \ref{lemma:mdps:complementary-pi-g}, the set $K$ can be redefined as follows:
$
K = \left\{ \pi \in L \bigg | \sqrt{\pi(s, a)} g(s, a) = 0, \forall a \in \A, s \in \S \right\}.$
The set $K$ can be partitioned using the feasible set $G$ of \eqref{eqn:op1} as $K = \hat K \cup \hat K^{\mathsf{c}}$, where $\hat K = K \cap G$. 
\begin{lemma}
All $\pi^* \in \hat K^{\mathsf{c}}$ are unstable equilibrium points of the system of ODEs (\ref{eq:mdps:pi-ode}).
\label{lemma:mdps:pi-convergence}
\end{lemma}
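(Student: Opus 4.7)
The plan is to combine the equilibrium description in Lemma~\ref{lemma:mdps:complementary-pi-g} with a one-coordinate escape estimate for the ODE~\eqref{eq:mdps:pi-ode}. First, I would sharpen the description of $\hat K^{\mathsf{c}}$: any $\pi^*\in K$ automatically satisfies $\pi^*\in L$ and $\sqrt{\pi^*(s,a)}\,g^{\pi^*}(s,a)=0$ for every $(s,a)$, so among the constraints of \eqref{eqn:op1} only $g(s,a)\le 0$ can fail. Hence $\pi^*\in\hat K^{\mathsf{c}}$ if and only if there exists a pair $(s_0,a_0)$ with $\pi^*(s_0,a_0)=0$ and $g^{\pi^*}(s_0,a_0)>0$. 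This isolates one distinguished coordinate along which the drift should push the trajectory away from zero whenever the coordinate becomes positive.

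Second, I would use smoothness of the map $\pi\mapsto v^\pi=(I-\beta P_\pi)^{-1}R_\pi$ on $L$, and hence of $\pi\mapsto g^\pi(s_0,a_0)$, to obtain an open neighborhood $U$ of $\pi^*$ in $L$ and a constant $c>0$ with $g^\pi(s_0,a_0)\ge c$ for all $\pi\in U$. This reduces instability to a scalar comparison argument on the single coordinate $\pi(s_0,a_0)$.

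Third, for the escape step, I would fix any neighborhood $V\subseteq U$ of $\pi^*$ and, for any target $\delta>0$, construct an initial point $\pi_0$ within distance $\delta$ of $\pi^*$ by transferring a small mass $\epsilon>0$ from some action $a'$ with $\pi^*(s_0,a')>0$ (such an $a'$ exists because $\pi^*(s_0,\cdot)$ is a probability vector) to $a_0$. Along any solution of~\eqref{eq:mdps:pi-ode} starting at $\pi_0$ and remaining in $V$, the coordinate $\pi(s_0,a_0)$ stays strictly positive, and the projected drift satisfies $\tfrac{d}{dt}\pi(s_0,a_0)\ge c'\sqrt{\pi(s_0,a_0)}$ for some $c'>0$. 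Comparison with the scalar ODE $\dot x=c'\sqrt{x}$ yields $\sqrt{\pi(s_0,a_0)(t)}\ge\sqrt{\epsilon}+c't/2$, so $\pi(s_0,a_0)$ exceeds any prescribed level in finite time and the trajectory exits $V$, establishing Lyapunov instability of $\pi^*$.

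The main obstacle I expect is accounting for the projection operator $\bar\Gamma$: one must verify that the simplex projection does not cancel the positive push on the $(s_0,a_0)$-coordinate. Since $\pi^*(s_0,a_0)=0$ and $g^{\pi^*}(s_0,a_0)>0$, while the other components of the drift $\sqrt{\pi(s,a)}\,g^\pi(s,a)$ depend continuously on $\pi$, the tangential redistribution that $\bar\Gamma$ performs to enforce $\sum_a\pi(s_0,a)=1$ still leaves a strictly positive drift on the $(s_0,a_0)$-coordinate, possibly with a smaller effective constant $c'\le c$ after shrinking $U$. A secondary technicality is the non-Lipschitz nature of $\sqrt{\cdot}$ at zero, but because $\pi_0(s_0,a_0)=\epsilon>0$ the trajectory is well-defined locally and the comparison bound applies to any solution branch.
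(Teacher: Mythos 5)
Your proposal follows the same route as the paper's proof: identify a coordinate $(s_0,a_0)$ with $\pi^*(s_0,a_0)=0$ and $g^{\pi^*}(s_0,a_0)>0$ (which exists precisely because $\pi^*\in K\setminus G$), pass by continuity to a neighborhood on which $g^\pi(s_0,a_0)$ stays positive, and conclude that the projected drift pushes that coordinate away from zero. The only difference is that you make the escape quantitative via comparison with $\dot x = c'\sqrt{x}$ and explicitly flag the interaction with $\bar\Gamma$, both of which the paper leaves implicit; your version is the same argument carried out more carefully.
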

\begin{proof}
For any $\pi^* \in K^{\mathsf{c}}$, there exists some $a \in \A(s), s \in \S$, such that $g^\pi(s, a) > 0$ and $\pi(s, a) = 0$ because $K^{\mathsf{c}}$ is not in the feasible set $G$. Let $B_\delta(\pi^*) = \left \{ \pi \in L | \thinspace \|\pi - \pi^*\| < \delta \right \}$. Choose $\delta > 0$ such that $g^\pi(s, a) > 0$ for all $\pi \in B_\delta(\pi^*) \setminus K$. So, $\bar\Gamma(\sqrt{\pi(s, a)} g^\pi(s, a)) > 0$ for any $\pi \in B_\delta(\pi^*) \setminus K$ which suggests that $\pi(s, a)$ will be increasingly moving away from $\pi^*$. Thus, $\pi^*$ is an unstable equilibrium point for the system of ODEs (\ref{eq:mdps:pi-ode}).
\end{proof}

\begin{remark}
($\bm{G = \hat K}$)
We already have that $\hat K \subseteq G$. So, it is sufficient to show that $G \subseteq \hat K$. A policy $\pi$ belongs to $G$ if $g^\pi(s, a) \le 0$ for all $a \in \A(s)$ and $s \in \S$. By definition,  $v^\pi$ is obtained from $\sum_{a \in \A(s)} \pi(s, a) g^\pi(s, a) = 0, \forall s \in \S.$ Since each term in the summation is negative, we have that
$$\pi(s, a) g^\pi(s, a) = 0 = \sqrt{\pi(s, a)} g^\pi(s, a), \forall a \in \A(s), s \in \S \text{ and hence } G = \hat K.$$
\end{remark}

\paragraph{\textbf{Proof of Theorem \ref{thm:actor-convergence}}}
\begin{proof}
The update of $\pi$ on the slower time-scale can be re-written as
\begin{align}
\label{eq:offsgsp-pi-equiv}
\pi_{n + 1}(s, a) = &\Gamma \left(\pi_{n}(s, a) + b(n) (H(\pi_n)  +  \eta_n) \right), \text{ where}
\end{align}
$H(\pi_n) = \sqrt{\pi_{n}(s, a)} g^\pi(s,a)$ and $\eta_n =  \sqrt{\pi_{n}(s, a)} g_n(s,a) - H(\pi_n)$.
We can infer the claim regarding convergence of $\pi_n$ governed by \eqref{eq:offsgsp-pi-equiv} using Kushner-Clark lemma (Theorem 2.3.1 in \cite{kushner1978stochastic}), if we verify the following:\\
\begin{inparaenum}[\bfseries (i)]
\item $H$ is a continuous function.
\item  The sequence $\eta_n,n\geq 0$ is a bounded random sequence with
$\eta_n \rightarrow 0$ almost surely as $n\rightarrow \infty$.
\item The step-sizes $b(n),n\geq 0$ satisfy
$  b(n)\rightarrow 0 \mbox{ as }n\rightarrow\infty \text{ and } \sum_n b(n)=\infty.$
\end{inparaenum}

Now, (i) follows by definition of $H$ and (iii) by assumption on step-sizes. Consider (ii): $\eta_n$ is bounded since we consider a finite state-action space setting ($\Rightarrow g(s,a)$ is bounded) and $\pi$ is trivially upper-bounded. 
From Theorem \ref{thm:critic-convergence}, $v_n \rightarrow v^\pi$ a.s. as $n\rightarrow \infty$ and hence, $\eta_n \rightarrow 0$ a.s.
The claim follows.
\end{proof}

\begin{remark}(\textbf{Avoidance of traps})
\label{remark:perturb}
 Note that from the foregoing, the set $K$ comprises of both stable and unstable attractors and in principle from Lemma \ref{lemma:mdps:pi-convergence}, the iterates $\pi_n$ governed by (\ref{eq:mdps:pi-ode}) can converge to an unstable equilibrium. 
 A standard trick to avoid such traps, as discussed in Chapter 4 of \cite{borkar2008stochastic}, is to introduce additional noise in the iterates. For this purpose, we perturb the policy every $\tau > 0$ iterations to obtain a new policy $\hat\pi$ as follows:
\begin{equation}
\label{eq:mdps:pi-perturb}
\hat{\pi}(s, a) = \dfrac{\pi(s, a) + \eta}{\sum \limits_{a \in \A} \left (\pi(s, a) + \eta \right )}, a \in \A.
\end{equation}
The above scheme ensures that the convergence of the policy sequence $\pi_n$ governed by \eqref{acalg:sqrt-pi-v} is to the stable set $\hat K$. 
\end{remark}

\paragraph{\textbf{Step 3: Global Optimality}}
Here we establish that our algorithm converges to a globally optimal policy.
\begin{lemma}
\label{lemma:pi-optimal}
If $\pi \in \hat K$, then $\pi$ is globally optimal and the corresponding value function $v^\pi$ is the same as the optimal value $v^*$.
\end{lemma}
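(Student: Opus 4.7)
The plan is to show that $v^\pi$ satisfies the Bellman optimality equation whenever $\pi \in \hat K$, invoke the standard uniqueness result to conclude $v^\pi = v^*$, and then apply Theorem \ref{theorem:mdps:J-equals-zero}(ii) to obtain global optimality of $\pi$.

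First, I would use the characterization $G = \hat K$ established in the remark following Lemma \ref{lemma:mdps:pi-convergence}: any $\pi \in \hat K$ satisfies $g^\pi(s,a) \le 0$ for every $s \in \S$ and $a \in \A$. Rearranging and taking the maximum over $a$ yields
\[ v^\pi(s) \;\ge\; \max_{a \in \A}\, \Bigl[\, r(s,a) + \beta \sum_{s'} p(s'|s,a)\, v^\pi(s') \,\Bigr], \quad \forall s \in \S. \]

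Next I would extract the reverse inequality from the Bellman evaluation identity $\sum_{a} \pi(s,a) g^\pi(s,a) = 0$ (which follows from the definition of $v^\pi$ and the fact that $\pi(s,\cdot)$ sums to one). Since every summand is non-positive yet the total is zero, each summand vanishes individually, giving $\pi(s,a) g^\pi(s,a) = 0$ for every $s, a$. Picking any $a$ in the non-empty support of $\pi(s,\cdot)$ then forces $r(s,a) + \beta \sum_{s'} p(s'|s,a) v^\pi(s') = v^\pi(s)$, which shows that the maximum above is attained and equals $v^\pi(s)$. Combining both inequalities, $v^\pi$ is a fixed point of the Bellman optimality operator.

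Finally, since that operator is a $\beta$-contraction on $\R^{|\S|}$ with unique fixed point $v^*$ (cf.\ \cite{BertsekasT96}), we conclude $v^\pi = v^*$. Consequently $g^\pi \equiv g^*$, the complementarity condition $\pi(s,a) g^*(s,a) = 0$ holds for all $s, a$, and Theorem \ref{theorem:mdps:J-equals-zero}(ii) certifies that $\pi$ is globally optimal. The only genuinely substantive step is the short pointwise observation that a non-positive sum equal to zero must be termwise zero; once that promotes $v^\pi$ to a Bellman-optimal fixed point, uniqueness and Theorem \ref{theorem:mdps:J-equals-zero}(ii) close the argument.
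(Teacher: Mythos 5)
Your proof is correct and follows essentially the same route as the paper: both establish that $v^\pi$ satisfies the Bellman optimality equation by combining $g^\pi(s,a)\le 0$ with the complementarity $\pi(s,a)g^\pi(s,a)=0$ on the support of $\pi$. You are slightly more explicit than the paper in deriving the complementarity from the policy-evaluation identity and in closing the argument via uniqueness of the Bellman fixed point and Theorem \ref{theorem:mdps:J-equals-zero}(ii), steps the paper leaves implicit.
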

\begin{proof}
\[\hspace{-14em}\text{If }\pi(s, a) > 0, \text{ then }g(s, a) = 0 \Rightarrow v^\pi(s) = r(s, a) + \beta \sum_{y \in U(s)} p(y|s, a) v^\pi(y).\]
\[\hspace{-14em}\text{If }\pi(s, a) = 0, \text{ then }g(s, a) \le 0 \Rightarrow v^\pi(s) \ge r(s, a) + \beta \sum_{y \in U(s)} p(y|s, a) v^\pi(y).\]
\[\hspace{-12em}\text{Thus, it follows that }\forall s \in \S,\quad v^\pi(s) = \max_{a \in \A(s)} \left [ r(s, a) + \beta \sum_{y \in U(s)} p(y|s, a) v^\pi(y) \right ].\]
\end{proof}

\section{Extension to incorporate function approximation}
\label{sec:fa}
The actor-critic algorithm described in Section \ref{sec:mdps:algorithms} is infeasible for implementation in high-dimensional settings where the state and action spaces are large. A standard approach to alleviate this problem is to employ function approximation techniques and parameterize the value function and policies as follows:

\paragraph{\textbf{Value function}} Using a linear architecture, the value function is approximated as $v^\pi(s)\approx f(s)\tr w,$ for any given policy $\pi$.
Here $f(s)$ is the \emph{state feature vector} and $w$ is the  \textit{value function parameter}, both in some low-dimensional subspace $\R^{d_1}$, with $d_1<<|\S|$.
 
\paragraph{\textbf{Policies}} We consider a parameterized class of policies such that each policy is continuously differentiable in its parameter. A common approach is to employ the Boltzmann distribution to obtain the following form for policies:
$ \pi^\theta(s,a)\approx \dfrac{e^{\theta^T\phi(s,a)}}{\sum\limits_{b \in \A} e^{\theta^T\phi(s,b)}}.$
Here $\phi(s,a)$ is a \emph{state-action feature vector} and $\theta$ is the \emph{policy parameter vector}, both assumed to be in a compact subset $\C \in \R^{d_2}$. 

\paragraph{\textbf{Update rule}}
Choose $a_n \sim  \pi^{\theta_n}(\cdot,s_m)$ and observe the reward $r(s_n,a_n)$.
Then, update the critic parameter $w_n$ and policy parameter $\theta_n$ as follows:
\begin{align}
 &\textrm{\bf TD Error:} \quad  g_n(s_n,a_n) := r(s_n,a_n)+\beta f(s_{n+1})\tr w_n - f(s_n)\tr w_n, \\
 &\textrm{\bf Critic:} \quad  w_{n + 1} = w_n + c(n) g_n(s_n, a_n) f(s_n),\\
&\textrm{\bf Actor:} \quad  \theta_{n + 1} = \hat\Gamma \big ( \theta_n  + b(n) \pi_n(s_n, a_n)^{3/2} \psi_n(s_n, a_n) g_n(s_n, a_n) \big ). \label{eq:fa}    
\end{align}
In the above, $\hat\Gamma$ projects any $\theta$ onto a compact set $\C \subset \R^{d_2}$ and $\psi_n(s_n,a_n)=\dfrac{\partial \log \pi_n(s_n,a_n)}{\partial \theta_n}$ are the \textit{compatible features}. 
For Boltzmann policies, $\psi_n(s_n,a_n)=\phi_n(s_n,a_n)-\sum\limits_{b \in \A} \pi_n(s_n,b)\phi_n(s_n,b).$

The critic recursion above follows from the standard TD(0) with function approximation update. The idea is to have the increment $\Delta w_n \propto \left[v_t(s_n) - f(s_n)^Tw_n\right]^2$, where $v_t(s_n)=r(s_n,a_n)+\beta f(s_{n+1})\tr w_n$ is the current estimate of the return. 
A natural update increment for the actor recursion is to have \\$\Delta \theta_n \propto -\dfrac{\partial J}{\partial \theta_n} = -\dfrac{\partial J}{\partial \pi_n}\cdot \dfrac{\partial \pi_n}{\partial \theta_n} = \sqrt{\pi_n(s_n, a_n)} g_n(s_n, a_n)\pi_n(s_n, a_n) \psi_n(s_n, a_n)$.  

\subsection*{\textbf{Preliminary result:}} 
In addition to irreducibility of the underlying Markov chain for any policy and differentiability of the policy, we assume that the feature matrix $\Phi$ with rows $f(s)\tr, \forall s \in \S$ is full rank. These assumptions are standard in the analysis of actor-critic algorithms (cf. \cite{bhatnagar2009natural}).
Let $d^{\pi^{\theta}}(s) =(1-\beta)\sum_{n=0}^\infty\beta^n\Pr(s_n=s|s_0;\pi^\theta)$ for any policy $\theta \subset \C$.
Let $\bar K$ denote the set of all equilibria  of the ODE:
\begin{align}
\dot \theta(t)= \check\Gamma\left( \sum_{s\in\S} d^{\pi^{\theta(t)}}(s) \sum_{a\in\A} \pi^{\theta(t)}(s,a) \nabla \pi^{\theta(t)}\big(r(s,a) + \beta \sum_{s' \in \S} p(s'\mid s,a) 
{w^{\theta(t)}}\tr f(s') -  {w^{\theta(t)}}\tr f(s)\big)\right).
\label{eq:fa-ode}
\end{align}
\begin{theorem}
\label{thm:fa}
The iterates $(w_{n}, \theta_{n}), n\ge 0$ governed by \eqref{eq:fa} satisfy 
$$ (w_{n}, \theta_{n})  \rightarrow \tilde K \text{ a.s. as } n \rightarrow \infty, \text{ where } \tilde K = \{ (w^{\theta}, \theta) \mid \theta \in \bar K\}.$$
In the above, $w^\theta$ is the solution to $A w^\theta = b$, where $A = \Phi\tr \Psi_\theta(I- \beta P)\Phi$ and $b=\Phi\tr \Psi_\theta r$ with $\Psi_\theta$ is a diagonal matrix with the stationary distribution of the Markov chain underlying policy with parameter $\theta$ as the diagonal entries  and $r$ is a column vector with entries $\sum_a \pi^\theta(s,a) r(s,a)$, for each $s \in \S$.
\end{theorem}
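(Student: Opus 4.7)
The plan is to adapt the two-timescale stochastic approximation framework used for Theorem~\ref{thm:actor-convergence} to the function-approximation setting, handling the critic on the faster scale and the actor on the slower scale, then identifying the equilibria with $\tilde K$. Since $b(n)=o(c(n))$, I would first freeze $\theta$ (so that $\pi^\theta$ is stationary), and analyze the critic recursion as a standard linear TD(0) with function approximation driven by a Markov chain with transition kernel $P_\theta(s'\mid s) = \sum_a \pi^\theta(s,a) p(s'\mid s,a)$. Writing $w_{n+1}=w_n+c(n)(g_n(s_n,a_n)f(s_n))$ and decomposing the increment into its mean under the stationary distribution $\Psi_\theta$ plus a martingale difference plus a vanishing Markov noise term, the associated ODE is $\dot w = b - Aw$ with $A=\Phi^\top \Psi_\theta(I-\beta P_\theta)\Phi$ and $b=\Phi^\top \Psi_\theta r$. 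Standard arguments (cf.\ \cite{BertsekasT96,bhatnagar2009natural}) give that $A$ is positive definite because $\Phi$ is full rank and $\beta P_\theta$ is a contraction in the $\Psi_\theta$-weighted norm, so the ODE has unique globally asymptotically stable equilibrium $w^\theta=A^{-1}b$, and $w_n\to w^\theta$ a.s.

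On the slower timescale, I would invoke the standard result that the critic has effectively converged ($w_n\approx w^{\theta_n}$) and recast the actor recursion in the Kushner--Clark form
\begin{equation*}
\theta_{n+1}=\hat\Gamma\bigl(\theta_n+b(n)\bigl(\bar H(\theta_n)+\eta_n\bigr)\bigr),
\end{equation*}
where $\bar H(\theta)$ is the conditional expectation of $\pi^\theta(s_n,a_n)^{3/2}\psi(s_n,a_n)g_n(s_n,a_n)$ under the sampling distribution, and $\eta_n$ collects the martingale difference plus the critic error $w_n-w^{\theta_n}$. Compactness of $\C$ and boundedness of rewards and of $\phi$ imply $\eta_n$ is bounded; the first summand is a martingale difference with summable conditional variance (hence converges), and Theorem~\ref{thm:critic-convergence}'s analogue from Step~1 forces the critic error portion of $\eta_n$ to vanish a.s., yielding $\eta_n\to 0$. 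Continuity of $\bar H$ in $\theta$ (via smoothness of the Boltzmann parameterization) together with the step-size conditions then lets the Kushner--Clark lemma identify the limit set of $\theta_n$ with the internally chain-transitive invariant set of the projected ODE, which by definition is contained in $\bar K$.

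Finally, to match the ODE (\ref{eq:fa-ode}) one rewrites $\bar H(\theta)$ as an expectation against the on-policy state distribution and the policy. The factor $\pi^\theta(s,a)\psi^\theta(s,a) = \nabla \pi^\theta(s,a)$ produces the $\nabla\pi^{\theta}$ term, and averaging the TD error $g_n$ under the induced Markov chain produces the bracketed expression in (\ref{eq:fa-ode}); the discounted occupation measure $d^{\pi^\theta}$ arises because only states that contribute to the discounted return weigh the gradient. The main obstacle I expect is precisely this identification: the raw sampling distribution of $s_n$ is the stationary distribution $\Psi_\theta$, not $d^{\pi^\theta}$, so one must argue either that the $\pi^\theta(s,a)^{3/2}$ reweighting combined with the structure of $g_n$ is equivalent (up to a policy-dependent positive rescaling that does not affect the equilibrium set $\bar K$) to a discounted occupation-measure average, or that the projection $\check\Gamma$ absorbs any such benign rescaling. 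Once $\bar H(\theta)$ is identified with the bracketed right-hand side of (\ref{eq:fa-ode}), combining with Step~1 yields $(w_n,\theta_n)\to\{(w^\theta,\theta):\theta\in \bar K\}=\tilde K$ a.s.
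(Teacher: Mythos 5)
Your proposal follows essentially the same route as the paper: freeze $\theta$ on the fast timescale and invoke the standard TD(0)-with-function-approximation convergence to $w^\theta=A^{-1}b$, then decompose the actor increment into its conditional mean, a martingale-difference term, and a critic-error term that vanishes by the fast-timescale result, and finally identify the limit set with the equilibria of the projected ODE (the paper closes via the Hirsch lemma on a $(T,\delta)$-perturbation rather than Kushner--Clark, but this is a cosmetic difference). The one obstacle you flag --- that the states are sampled from the stationary distribution while the ODE \eqref{eq:fa-ode} is written against the discounted occupation measure $d^{\pi^\theta}$ --- is real, and is precisely the step the paper's proof elides with the phrase ``a simple calculation,'' so your attempt is, if anything, more candid about where the argument is thin.
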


\section{Simulation Experiments}
\label{sec:mdps:simulation}

\tikzset{
parallel segment/.style={
   segment distance/.store in=\segDistance,
   segment pos/.store in=\segPos,
   segment length/.store in=\segLength,
   segment label/.store in=\segLabel,
   segment labelpos/.store in=\segLabelPos,
   to path={
   ($(\tikztostart)!\segPos!(\tikztotarget)!\segLength/2!(\tikztostart)!\segDistance!90:(\tikztotarget)$) -- node[\segLabelPos]{\segLabel}
   ($(\tikztostart)!\segPos!(\tikztotarget)!\segLength/2!(\tikztotarget)!\segDistance!-90:(\tikztostart)$)  \tikztonodes
   }, 
   segment pos=.5,
   segment length=1.5,
   segment distance=3.5mm,
  segment labelpos=above
},
}
\begin{figure*}
\centering
\begin{tabular}{cc}
\begin{subfigure}[b]{0.4\textwidth}
\centering
\scalebox{0.8}{
\begin{tikzpicture}[font=\sffamily, every matrix/.style={ampersand replacement=\&,column sep=2cm,row sep=2cm}, process/.style={draw,thick,circle,fill=blue!20}]
\matrix{
\&    \node[process] (a5) {5};
      \& \node[process] (a6) {6}; \& \\

   \node[process] (a1) {1};
      \& \node[process] (a3) {3};   \& \node[process] (a4) {4};\\
  \& \node[process] (a2) {2};\\
  };
 \draw[green!40!black] (a1) --node[above]{$18$} (a5);
 \draw[green!40!black] (a5) --node[below]{$8$} (a6);
 \draw[green!40!black] (a6) --node[right]{$6$} (a4);
 \draw[green!40!black] (a4) --node[above]{$11$} (a3);
 \draw[green!40!black] (a3) --node[below]{$9$} (a1);
 \draw[green!40!black] (a3) --node[right]{$2$} (a5);
 \draw[green!40!black] (a1) --node[above]{$7$} (a2);
 \draw[green!40!black] (a3) --node[right]{$10$} (a2);
 \draw[green!40!black] (a4) --node[above]{$19$} (a2);

\draw[->,red] (a5) to[parallel segment, segment label=]      (a6);
\draw[->,red] (a4) to[parallel segment, segment label=, segment labelpos=left,   segment distance=2.5mm]      (a6);
\draw[->,red] (a2) to[parallel segment, segment label=, segment labelpos=left,   segment distance=2.5mm]      (a3);
\draw[->,red] (a1) to[parallel segment, segment label=, segment distance=2.5mm]      (a3);
\draw[->,red] (a3) to[parallel segment, segment label=, segment labelpos=left,   segment distance=2mm]      (a5);
\end{tikzpicture}}
\caption{Six node graph}
\label{fig:6node}
\end{subfigure}
&
\begin{subfigure}[b]{0.6\textwidth}
\scalebox{0.75}{
\begin{tikzpicture}
\begin{scope}
\draw[very thick]  (1, 1) grid (9,5);
\foreach \x in {1,...,9}
{
	\foreach \y in {1,...,5}
	{
		\fill[red] (\x,\y) circle [radius=2.5pt];        
	}
}
\foreach \x in {1,...,8}
{
	\foreach \y in {1,...,4}
	{
		\draw (\x,\y) -- (\x+1,\y+1);
	}
}
\foreach \x in {2,...,9}
{
	\foreach \y in {4,...,1}
	{
		\draw (\x,\y) -- (\x-1,\y+1);
	}
}
\draw[-latex,thick](8,0.25)node[left]{$\mathsf{\textbf{Destination}}$}
        to[out=0,in=200] (9-0.1,1-0.1);	

        \node[coordinate] (0) at (-0.75,-0.25) {};
        \node[right=of 0] (1)  {$\mathsf{\textbf{Rewards}}$};
\fill[red] (3,-0.25) circle [radius=2.5pt];                
\draw[->] (3,-0.25) -- node{{\scriptsize \bf -5}} (4,-0.25);
\draw[->] (3,-0.25) -- node{{\scriptsize \bf -5}} (2,-0.25);
\draw[->] (3,-0.25) -- node{{\scriptsize \bf -10}} (4,0.75);
\draw[->] (3,-0.25) -- node{{\scriptsize \bf -15}} (3,0.75);
\draw[->] (3,-0.25) -- node{{\scriptsize \bf -10}} (2,0.75);
\draw[->] (3,-0.25) -- node{{\scriptsize \bf -10}} (2,-1.25);
\draw[->] (3,-0.25) -- node{{\scriptsize \bf -15}} (3,-1.25);
\draw[->] (3,-0.25) -- node{{\scriptsize \bf -10}} (4,-1.25);
\end{scope}
\end{tikzpicture}}
\caption{$44$ node graph}
\label{fig:44node}
\end{subfigure}
\end{tabular}
\caption{Network graphs with associated rewards}
\label{fig:graphs}
\end{figure*}
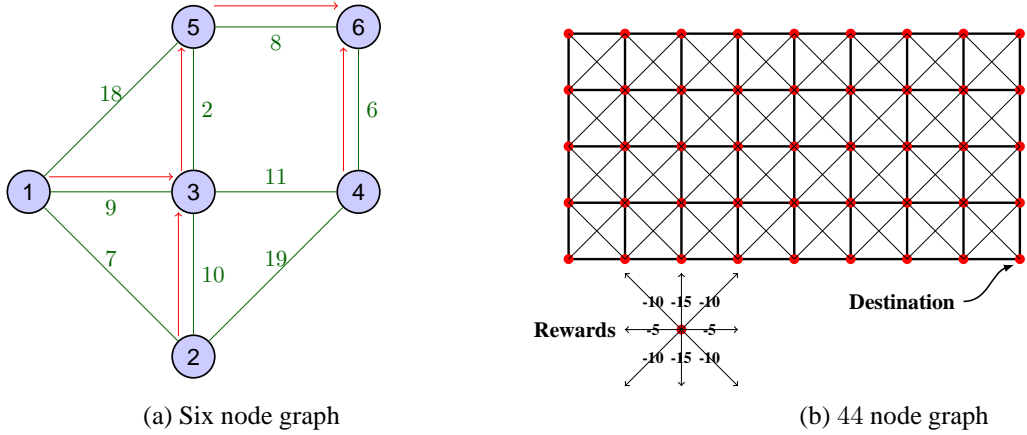

\paragraph{\textbf{Setup}}
Routing packets through a communication network is a natural application for reinforcement learning algorithms. Q-routing, that is, using Q-learning for routing packets in dynamically changing networks has been investigated among others by \cite{boyan1994packet} and \cite{bhatnagar2008new}. We have considered a highly simplified version of the problem over two network graph settings:
\begin{description}
\item[Six node graph] As shown in Fig. \ref{fig:6node}, the state space here consists of the nodes themselves, that is $\mathcal{S} = \{1, 2, 3, 4, 5, 6\}$, and the number of actions in a state corresponds to the number of neighbouring nodes to which a packet can be routed from the given node. The next state is chosen randomly and node $6$ is the \textit{absorbing} destination node. Further, each run started from state $1$ and the initial estimate of the Q-value was $0$ for all states.
Rewards in each transition are negative of the edge weight (as depicted in Fig. \ref{fig:6node}).  
\item[$44$ node graph] As shown in Fig. \ref{fig:44node}, the state space here is $\mathcal{S} = \{0,1, 2, ....., 43, 44\}$, with $44$ being the destination node. The actions are as follows: at any node start from direction east and move in clockwise direction. $1^{st}$ action is $a0$, second action is $a1$ and so on. 
For all actions, rewards are shown in Fig. \ref{fig:44node}. 
\end{description}

\begin{figure*}
\centering
\begin{tabular}{cc}
\begin{subfigure}[b]{0.4\textwidth}
	\begin{tabular}{| c | c | c | c |}
		\hline
		\multirow{2}{*}{\textbf{Node}} & \textbf{Value} & \multirow{2}{*}{\textbf{MPA}\footnotemark[1]} & \multirow{2}{*}{\textbf{Probability}}\\
		&\textbf{function}&  &\\
		\hline
$1$	&	$-17.83$ &		 $2$ & $0.87$\\
$2$	&	$-19.64$ &		 $2$ & $0.96$\\
$3$	&	$-9.24$ &		 $1$ & $0.95$\\
$4$	&	$-6.00$ &		 $1$ & $0.96$\\
$5$	&	$-8.22$ &		 $1$ & $0.92$\\
		\hline
	\end{tabular}
  \caption{AC-OPT algorithm}
  \label{table:6node-ac}
	\end{subfigure}
	&
\begin{subfigure}[b]{0.4\textwidth}	
\begin{tabular}{| c | c | c | c | c |}
\hline
\multirow{2}{*}{\textbf{Node}} & \multirow{2}{*}{\textbf{Q(s,1)}} & \multirow{2}{*}{\textbf{Q(s,2)}} & \multirow{2}{*}{\textbf{Q(s,3)}} & \multirow{2}{*}{\textbf{Q(s,4)}}\\
		&& & &\\
\hline
1	&	-24.4 &		\textbf{-15.72} &	-20.376	&	N.A\\
2	&	-25.72 &	\textbf{-16.72}	&	-19.576	&	N.A\\
3	&	\textbf{-8.4} &		-15.8	&	-23.376	&	-21.576\\
4	&	\textbf{-6} &		-17.72	&	-32.376	&	N.A\\
5	&	\textbf{-8} &		-8.72	&	-30.576	&	N.A\\
\hline
\end{tabular}
 \caption{Q-learning algorithm}
 \label{table:6node-qlearning}
\end{subfigure}
\end{tabular}
\caption{Performance of Q-learning and actor-critic algorithms on six node network graph}
\label{fig:6perf}
\end{figure*}

\footnotetext[1]{MPA stands for "Most probable action".}
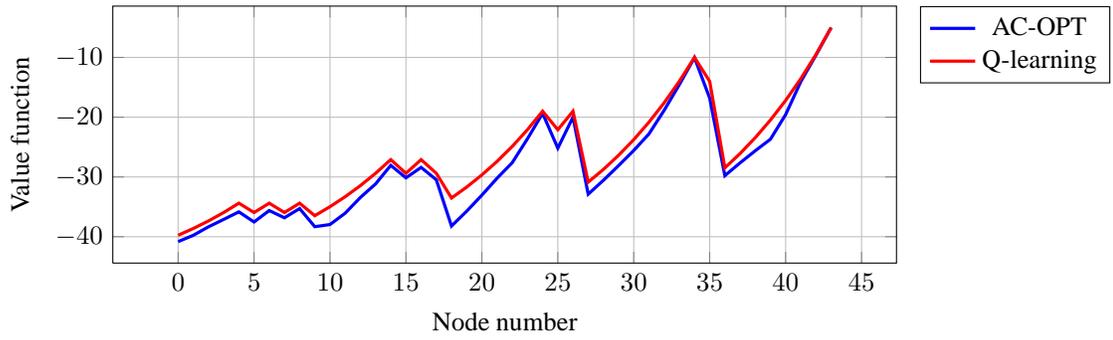
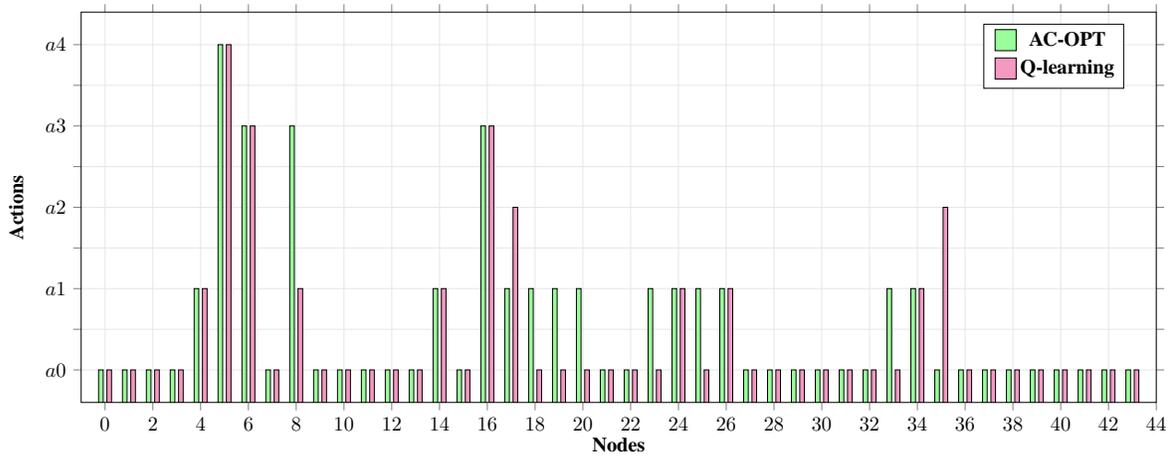
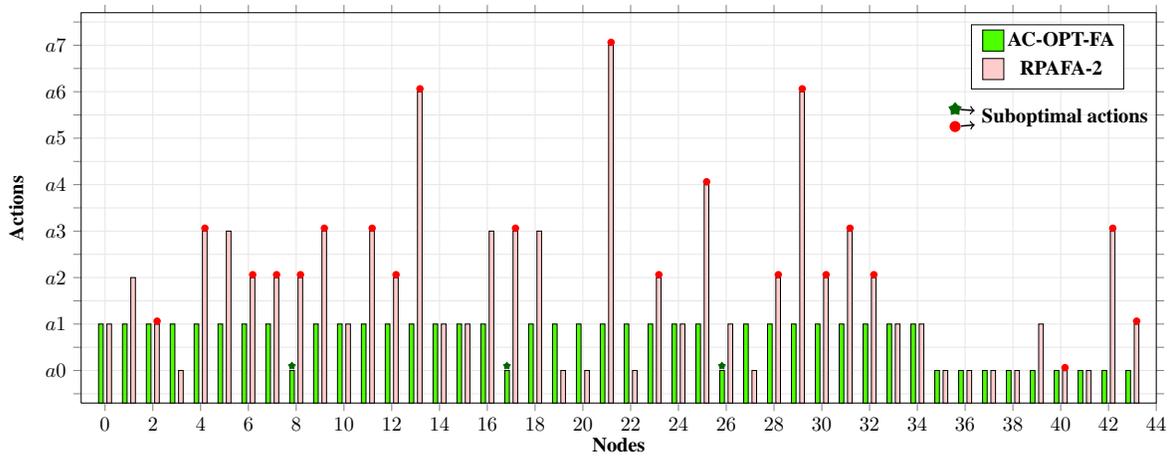
\begin{figure}
\centering
\begin{tabular}{c}
\begin{subfigure}[b]{\textwidth}
\centering
\begin{tikzpicture}
\begin{axis}[xlabel={{\small Node number}}, ylabel={{\small Value function}},width=12cm,height=5cm,grid,no markers,tick scale binop={\times},font={\small},legend pos=outer north east]
\addplot+[color=blue,very thick] table[x index=0,y index=1,col sep=comma] {fig/44node_ac.csv};
\addlegendentry{AC-OPT}
\addplot+[color=red,very thick] table[x index=0,y index=1,col sep=comma] {fig/44node_qlearning.csv};
\addlegendentry{Q-learning}
\end{axis}
\end{tikzpicture}
\caption{Value function}
\label{fig:vf}
\end{subfigure}
\\
 \begin{subfigure}[b]{\textwidth}
 \centering
\scalebox{0.7}{\begin{tikzpicture}
\begin{axis}[
ybar={2pt},
legend pos=north east,
legend image code/.code={\path[fill=white,white] (-2mm,-2mm) rectangle
(-3mm,2mm); \path[fill=white,white] (-2mm,-2mm) rectangle (2mm,-3mm); \draw
(-2mm,-2mm) rectangle (2mm,2mm);},
ylabel={\bf Actions},
xlabel={\bf Nodes},
xmin={-1},
xmax={44},
ytick align=outside,
xticklabel style={align=center},
yticklabels={0,,$a0$,,$a1$,,$a2$,,$a3$,,$a4$,,$a5$,,$a6$,,$a7$},
bar width=2.5pt,
   grid,
grid style={gray!20},
width=22cm,
height=9cm,
]
\addplot[fill=green!40!white]  table[x index=0, y index=4, col sep=comma] {fig/44node_actions.csv} ;
\addlegendentry{{\bf AC-OPT}}
\addplot[fill=magenta!50!white]  table[x index=0, y index=3, col sep=comma] {fig/44node_actions.csv} ;
\addlegendentry{{\bf Q-learning}}
\end{axis}
\end{tikzpicture}}
\caption{Recommended actions, state-wise,for full-state algorithms: Q-learning and AC-OPT.}
           \label{fig:ptest}	    
 \end{subfigure}
 \\
  \begin{subfigure}[b]{\textwidth}
 \centering
\scalebox{0.7}{\begin{tikzpicture}
\begin{axis}[
ybar={2pt},
legend pos=north east,
legend image code/.code={\path[fill=white,white] (-2mm,-2mm) rectangle
(-3mm,2mm); \path[fill=white,white] (-2mm,-2mm) rectangle (2mm,-3mm); \draw
(-2mm,-2mm) rectangle (2mm,2mm);},
ylabel={\bf Actions},
xlabel={\bf Nodes},
xmin={-1},
xmax={44},
try min ticks=12,
ytick align=outside,
xticklabel style={align=center},
yticklabels={0,,,$a0$,,$a1$,,$a2$,,$a3$,,$a4$,,$a5$,,$a6$,,$a7$},
bar width=2.5pt,
   grid=both,
grid style={gray!20},
width=22cm,
height=9cm,
]
\addplot[fill=green!70!yellow]  table[x index=0, y index=4, col sep=comma] {fig/44node_fa_actions.csv} ;
\addlegendentry{{\bf AC-OPT-FA}}
\addplot[fill=red!20]  table[x index=0, y index=3, col sep=comma] {fig/44node_fa_actions.csv} ;
\addlegendentry{{\bf RPAFA-2}}
\node [star, star points=5,fill=green!40!black,scale=0.3]  at (axis cs:7.82,4.4) {};
\node [star, star points=5,fill=green!40!black,scale=0.3]  at (axis cs:16.82,4.4) {};
\node [star, star points=5,fill=green!40!black,scale=0.3]  at (axis cs:25.82,4.4) {};
\node [circle,fill=red,inner sep=0pt,minimum size=4pt]  at (axis cs:2.18,8.25) {};
\node [circle,fill=red,inner sep=0pt,minimum size=4pt]  at (axis cs:4.18,16.25) {};
\node [circle,fill=red,inner sep=0pt,minimum size=4pt]  at (axis cs:6.18,12.25) {};
\node [circle,fill=red,inner sep=0pt,minimum size=4pt]  at (axis cs:7.18,12.25) {};
\node [circle,fill=red,inner sep=0pt,minimum size=4pt]  at (axis cs:8.18,12.25) {};
\node [circle,fill=red,inner sep=0pt,minimum size=4pt]  at (axis cs:9.18,16.25) {};
\node [circle,fill=red,inner sep=0pt,minimum size=4pt]  at (axis cs:11.18,16.25) {};
\node [circle,fill=red,inner sep=0pt,minimum size=4pt]  at (axis cs:12.18,12.25) {};
\node [circle,fill=red,inner sep=0pt,minimum size=4pt]  at (axis cs:13.18,28.25) {};
\node [circle,fill=red,inner sep=0pt,minimum size=4pt]  at (axis cs:17.18,16.25) {};
\node [circle,fill=red,inner sep=0pt,minimum size=4pt]  at (axis cs:21.18,32.25) {};
\node [circle,fill=red,inner sep=0pt,minimum size=4pt]  at (axis cs:23.18,12.25) {};
\node [circle,fill=red,inner sep=0pt,minimum size=4pt]  at (axis cs:25.18,20.25) {};
\node [circle,fill=red,inner sep=0pt,minimum size=4pt]  at (axis cs:28.18,12.25) {};
\node [circle,fill=red,inner sep=0pt,minimum size=4pt]  at (axis cs:29.18,28.25) {};
\node [circle,fill=red,inner sep=0pt,minimum size=4pt]  at (axis cs:30.18,12.25) {};
\node [circle,fill=red,inner sep=0pt,minimum size=4pt]  at (axis cs:31.18,16.25) {};
\node [circle,fill=red,inner sep=0pt,minimum size=4pt]  at (axis cs:32.18,12.25) {};
\node [circle,fill=red,inner sep=0pt,minimum size=4pt]  at (axis cs:40.18,4.25) {};
\node [circle,fill=red,inner sep=0pt,minimum size=4pt]  at (axis cs:42.18,16.25) {};
\node [circle,fill=red,inner sep=0pt,minimum size=4pt]  at (axis cs:43.18,8.25) {};

\node [star, star points=5,fill=green!40!black,scale=0.5]  at (axis cs:35.58,26.5) (a1) {};
\node [circle,fill=red,inner sep=0pt,minimum size=6pt]  at (axis cs:35.58,25) (a2) {};
\node  at (axis cs:40.18,25.8) (a3) {\textbf{Suboptimal actions}};
\draw[thick,->] (a1) -- (a3);
\draw[thick,->] (a2) -- (a3);
\end{axis}
\end{tikzpicture}}
\caption{Recommended actions, state-wise, for function approximation algorithms: AC-OPT-FA and RPAFA-2}
           \label{fig:fatest}	    
 \end{subfigure}
\end{tabular}
\caption{Performance comparison on a $44$-node network graph}
\end{figure}

On these two settings, we implemented both the Q-learning and our actor-critic algorithm (henceforth, referred to as AC-OPT).
For both algorithms, we set the discount factor $\beta=0.8$. The initial randomized policy was set to the uniform distribution. For AC-OPT, the policy was perturbed every $\tau=10$ iterations (see Remark \ref{remark:perturb}).  All the results presented are averaged over $50$ independent runs of the respective algorithm.

\paragraph{\textbf{Results}} 
The tales in Figs. \ref{table:6node-ac}--\ref{table:6node-qlearning} present
the results obtained upon convergence of the AC-OPT and Q-learning algorithms for the six node network graph setting, respectively.  
It is evident that both algorithms converge to the optimal policy. While Q-learning recommends the best action using Q-values, AC-OPT, being randomized,  suggests the optimal action with high probability. 

Fig. \ref{fig:vf} presents the value function estimates obtained from both algorithms on the $44$ node network graph, while Fig. \ref{fig:ptest} compares the actions suggested by both algorithms upon convergence, for each state(=node) in the network graph. It is evident that AC-OPT recommends the same (as well as optimal) actions as Q-learning on almost all the states.  Even though there is change in the recommended actions on a small number of states, the difference in value estimates here is negligible.

\paragraph{\textbf{Function approximation}} 
We show here the results the function approximation variant of our actor-critic algorithm (henceforth referred to as AC-OPT-FA) and the RPAFA-2 algorithm from \cite{abdulla2007reinforcement}.
For any state $s$, let $a \equiv \lfloor \frac{s}{9} \rfloor $ and $b \equiv s \mod 9$. Then, the 
 state features are chosen as: $f(s)=(4-a, 8-b, 4+a-b, 1)\tr$. 
Along similar lines, the state-action feature $\phi(s, a)=(4-a, 8-b, 4+a-b, r(x,y ), 1)\tr$. 

Fig. \ref{fig:fatest} compares the actions recommended by AC-OPT-FA and RPAFA-2 algorithms, while also highlighting the sub-optimal actions. 
 It is evident that AC-OPT-FA recommends with high probability ($\approx 0.9$ on the average) the best action with a $93\%$ accuracy. On the other hand. RPAFA-2  achieved only a $50\%$ accuracy, i.e., sub-optimal actions suggested over half of the state space.    
\section{Conclusions}
\label{sec:mdps:conclusions}
In this paper, we proposed a new actor-critic algorithm with guaranteed convergence to the optimal policy in a discounted MDP.  The proposed algorithm was validated through simulations on a simple shortest path problem in networks. 
A topic of future study is to strengthen the convergence result of the function approximation variant of our actor-critic algorithm.


\section*{Appendix}
\appendix

\section{Proofs for the actor-critic algorithm} 
\label{sec:appendix-acopt}
\begin{lemma}
\label{lemma:mdps:v_pi}
Let $R_\pi = \left < r(s, \pi), s \in \S \right >^T$ be a column vector of rewards and $P_\pi = [ p(y|s, \pi), s \in \S, y \in \S ]$ be the transition probability matrix, both for a given $\pi$.
Then, the system of ODEs (\ref{eq:mdps:v-ode}) has a unique globally asymptotically stable equilibrium given by
\begin{equation}
\label{eq:mdps:v_pi}
\v_\pi = \left [ I - \beta P_\pi \right ]^{-1} R_\pi.
\end{equation}
\end{lemma}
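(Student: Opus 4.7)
The plan is to rewrite the ODE (\ref{eq:mdps:v-ode}) in vector form and then analyze it as a linear, time‑invariant system. Stacking the components $v(s)$ into a column vector $v\in\mathbb{R}^{|\S|}$, the ODE becomes
\[
\dot v \;=\; R_\pi + \beta P_\pi v - v \;=\; -(I-\beta P_\pi)\, v + R_\pi.
\]
So the first step is to verify that $I-\beta P_\pi$ is invertible, which immediately pins down the unique equilibrium as $v_\pi = (I-\beta P_\pi)^{-1} R_\pi$. Since $P_\pi$ is a row‑stochastic matrix, every eigenvalue $\lambda$ of $P_\pi$ satisfies $|\lambda|\le 1$; consequently every eigenvalue of $\beta P_\pi$ lies in the closed disc of radius $\beta<1$, and $1-\beta\lambda\neq 0$ for each such $\lambda$. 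Hence $I-\beta P_\pi$ is non‑singular and the equilibrium $v_\pi$ exists and is unique.

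The second step is global asymptotic stability. I would perform the change of variables $u(t):=v(t)-v_\pi$, which converts the ODE to the homogeneous linear system
\[
\dot u \;=\; -(I-\beta P_\pi)\, u,
\]
with closed‑form solution $u(t)=e^{-(I-\beta P_\pi)t}u(0)$. It therefore suffices to show that $-(I-\beta P_\pi)$ is Hurwitz, i.e.\ every eigenvalue has strictly negative real part. An eigenvalue of $-(I-\beta P_\pi)$ has the form $-(1-\beta\lambda)$ for some eigenvalue $\lambda$ of $P_\pi$, and
\[
\operatorname{Re}\bigl(-(1-\beta\lambda)\bigr) \;=\; -\bigl(1-\beta\operatorname{Re}(\lambda)\bigr) \;\le\; -(1-\beta)\;<\;0,
\]
because $\operatorname{Re}(\lambda)\le |\lambda|\le 1$ and $\beta\in(0,1)$. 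Thus $e^{-(I-\beta P_\pi)t}\to 0$ as $t\to\infty$, which yields $v(t)\to v_\pi$ for every initial condition $v(0)$, proving global asymptotic stability.

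The potential subtlety, and the step I would be most careful with, is the eigenvalue bound: one must use $\operatorname{Re}(\lambda)\le |\lambda|$ rather than $|\lambda|\le 1$ directly, because the spectral radius of a matrix $A$ does not by itself give a Hurwitz certificate for $-A$ when $A$ has complex eigenvalues. Once this observation is in place the conclusion is immediate, and no Lyapunov argument or contraction‑mapping detour is actually needed; if a Lyapunov certificate were preferred, the weighted sup‑norm $\|v-v_\pi\|_\infty$ would work since $v\mapsto R_\pi+\beta P_\pi v$ is a $\beta$‑contraction in that norm, but the spectral argument above is the shortest route and directly delivers both uniqueness and global asymptotic stability as required.
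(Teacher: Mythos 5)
Your proposal is correct and follows essentially the same route as the paper: rewrite the ODE in vector form as a linear time-invariant system $\dot v = R_\pi + (\beta P_\pi - I)v$ and conclude from the fact that $\beta P_\pi - I$ is Hurwitz. The only difference is that you explicitly justify the eigenvalue claim (via $\operatorname{Re}(\lambda)\le|\lambda|\le 1$ for the row-stochastic $P_\pi$), which the paper simply asserts as "standard linear systems theory."
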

\begin{proof}
The system of ODEs (\ref{eq:mdps:v-ode}) can be re-written in vector form as given below.
\begin{equation}
\label{eq:mdps:v-vector-ode}
\dfrac{d v}{dt} = R_\pi + \beta P_\pi v - v.
\end{equation}
Rearranging terms, we get
\[\dfrac{d v}{dt} = R_\pi + ( \beta P_\pi - I ) v,\]
where $I$ is the identity matrix of suitable dimension. Note that for a fixed $\pi$, this ODE is linear in $v$ and moreover, all the eigenvalues of $(\beta P_\pi - I )$ have negative real parts. Thus by standard linear systems theory, the above ODE has a unique globally asymptotically stable equilibrium which can be computed by setting $\dfrac{d v}{dt} = 0$, that is, $R_\pi + ( \beta P_\pi - I ) v = 0.$ The trajectories of the ODE (\ref{eq:mdps:v-vector-ode}) converge to the above equilibrium starting from any initial condition in lieu of the above.
\end{proof}

\paragraph{Proof of Theorem \ref{thm:critic-convergence}}\ \\
For establishing the proof, we require the notion of $(T,\delta)$-perturbation of an ODE, defined as follows:
\begin{definition}
Consider the ODE
\begin{align}
\label{eq:ode}
\dot{x}(t) = f(x(t)). 
\end{align}
 Given $T, \delta >0$, we say that 
$\bar x(\cdot)$ is a $(T,\delta)$-perturbation of \eqref{eq:ode}, if
there exist $0=T_0 <T_1 <T_2 <\cdots <T_n \uparrow \infty$ such that
$T_{n+1}-T_n \geq T,$ for all  $n\ge0$ and
$ \sup_{t\in [T_n, T_{n+1}]} \parallel \bar x(t) - x(t)\parallel <\delta$, for all $n\ge0$.
\end{definition}
Let $\Z$ be the globally asymptotically stable attractor set for (\ref{eq:ode})
and $\Z^\epsilon$ be the $\epsilon$-neighborhood of $\Z$. Then, the following lemma by Hirsch  (see Theorem 1 on pp. 339 of \cite{hirsch1989convergent}) is useful in establishing the convergence of a $(T,\delta)$-perturbation to the limit set $Z^\epsilon$.
\begin{lemma}[Hirsch Lemma]
\label{hirsch-lemma}
Given $\epsilon$, $T >0$, $\exists \bar{\delta} >0$ such that for
all $\delta \in (0, \bar{\delta})$, every $(T,\delta)$-perturbation of (\ref{eq:ode})
converges to $\Z^\epsilon$. 
\end{lemma}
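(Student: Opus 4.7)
The plan is to combine two standard ingredients: a converse Lyapunov function for the globally asymptotically stable attractor $\Z$, and continuous dependence of ODE trajectories on initial conditions over bounded time horizons. By the converse Lyapunov theorem for asymptotically stable compact sets, one obtains a $C^1$ function $V \geq 0$ with $V^{-1}(0) = \Z$ and $\dot V(x) \leq -\alpha(\mathrm{dist}(x,\Z))$ along the ODE flow, for some continuous function $\alpha$ that is strictly positive away from zero. Fix the target accuracy $\epsilon$, choose $\eta_1 > 0$ small enough that $\{V \leq \eta_1\} \subset \Z^\epsilon$, and pick an intermediate level $\eta_2 \in (0, \eta_1)$; on the compact annular region $A = \{\eta_2 \leq V \leq \eta_1\}$, the drift $\dot V$ along true ODE trajectories is bounded above by some $-\gamma < 0$.

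Next, assuming that $\bar x(\cdot)$ remains in a common compact set (which is standard whenever the iterates are bounded, as in the present application via the projection $\Gamma$ and the stability of the critic recursion), I would exploit the $(T,\delta)$-perturbation property as follows: on each window $[T_n, T_{n+1}]$ of length at least $T$, the perturbation $\bar x$ stays within $\delta$ of the ODE solution $x^{(n)}(\cdot)$ started at $\bar x(T_n)$. Since $V$ is Lipschitz on the relevant compactum with some constant $L$, this gives $|V(\bar x(T_{n+1})) - V(x^{(n)}(T_{n+1}))| \leq L\delta$, and whenever $\bar x(T_n) \in A$ and the ODE trajectory stays in $A$ throughout the window, the Lyapunov decrease along the true ODE yields $V(\bar x(T_{n+1})) \leq V(\bar x(T_n)) - \gamma T + L\delta$.

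Choosing $\bar\delta$ so that $L\bar\delta < \gamma T / 2$ and $L\bar\delta < (\eta_1 - \eta_2)/2$ forces $V(\bar x(T_n))$ to drop by at least $\gamma T / 2$ per window while in $A$, so $\bar x$ enters $\{V < \eta_2\}$ in finitely many windows. A symmetric small-perturbation argument with the same $\bar\delta$ shows that once inside $\{V < \eta_2\}$, one cannot cross up to $V \geq \eta_1$, so the trajectory is eventually trapped in $\{V \leq \eta_1\} \subset \Z^\epsilon$. The main obstacle I expect is the quantitative tradeoff: $\delta$, the minimum Lyapunov decrease $\gamma T$, and the annular buffer $\eta_1 - \eta_2$ all have to be balanced consistently, and the Lyapunov/Lipschitz constants must be uniform, which in turn requires confinement of $\bar x(\cdot)$ to a fixed compact set — a condition that in practice must be imported from the specific recursion rather than from the ODE alone.
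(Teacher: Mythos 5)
The paper does not actually prove this lemma: it is imported verbatim from Hirsch (Theorem~1, p.~339 of the cited 1989 paper), so there is no in-paper argument to match yours against. Your converse-Lyapunov proof is the standard self-contained route to this result and is essentially sound as a sketch: Hirsch's own argument is phrased in terms of attractor blocks (compact positively invariant neighborhoods of $\Z$ that the flow strictly enters), which are precisely the sublevel sets $\{V\le\eta\}$ of your Lyapunov function, so the two proofs are morally the same, with yours making the quantitative bookkeeping ($\gamma T$ versus $L\delta$ versus $\eta_1-\eta_2$) explicit. Two points of precision you should tighten. First, the uniform decrease bound $\dot V\le-\gamma$ must be taken on all of $\{x\in C: V(x)\ge\eta_2\}$ (where $C$ is the confining compact set), not merely on the annulus $\{\eta_2\le V\le\eta_1\}$; otherwise a perturbed trajectory starting with $V>\eta_1$ is not covered, and the lemma is invoked in the paper for the critic iterates whose initial condition is arbitrary. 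This costs nothing since $C\cap\{V\ge\eta_2\}$ is still compact and disjoint from $\Z$. Second, the confinement of $\bar x(\cdot)$ to a fixed compactum, which you correctly flag as an imported hypothesis, is genuinely needed and is implicitly assumed in Hirsch's statement as well (in the paper's application it follows from boundedness of the rewards and of the value iterates); without it the Lipschitz constant $L$ and the bound $\gamma$ are not uniform and the window-by-window descent can degenerate. With those two repairs your argument is complete and, if anything, more informative than the bare citation the paper provides, since it exposes exactly how $\bar\delta$ depends on $\epsilon$ and $T$.
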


\begin{proof} \textbf{\textit{(Theorem \ref{thm:critic-convergence})}}
Fix a state $s \in \S$. Let $\{\bar{n}\}$ represent a sub-sequence of iterations in algorithm \eqref{acalg:sqrt-pi-v} when the state is $s \in \S$. Also, let $Q_n = \left \{ \bar n : \bar n < n \right \}$. For a given $\pi$, the updates of $v$ on the slower time-scale $\{c(n)\}$ given in algorithm \eqref{acalg:sqrt-pi-v} can be re-written as
\begin{equation}
\label{eq:mdps:v-update-2}
v_{\bar{n} + 1}(s) = v_{\bar{n}}(s) + c(n) \left [ \sum\limits_{a \in \A(s)} \pi_{\bar{n}}(s, a) g_{\pi_{\bar n}}(s, a) + \tilde\chi_{\bar{n}} \right ],
\end{equation}
where $\tilde\chi_{\bar{n}} = r(s, a) + \beta v_{\bar n}(s') - \sum\limits_{a \in \A(s)} \pi_{\bar{n}}(s, a) g_{\pi_{\bar n}}(s, a)$, is the noise term. Let $\tilde M_{n} = \sum\limits_{m \in Q_n} c(m) \tilde\chi_m$. 
Then, $\tilde M_{n}, n \ge 0,$ is a convergent martingale sequence by the martingale convergence theorem (since $\sum\limits_{\bar n} c^2(\bar n) < \infty$ and $\| g \| \stackrel{\triangle}{=} | g_{(\cdot)}(s, a) | < \infty$). The equation \eqref{eq:mdps:v-update-2} can now be seen to be a $(T, \delta)$-perturbation of the system of ODEs (\ref{eq:mdps:v-ode}). 
Thus, by Lemma \ref{hirsch-lemma}, it can be seen that $v_n$ converges to the globally asymptotically stable equilibrium $v_\pi$ (see equation (\ref{eq:mdps:v_pi})) of the system of ODEs (\ref{eq:mdps:v-ode}).
\end{proof}

\paragraph{Proof of Lemma \ref{lemma:mdps:complementary-pi-g}}
\begin{proof}
\ \\[-3.5ex]
\begin{description}
 \item[If part:] If $\pi \in L$ and $\sqrt{\pi(s, a)} g^\pi(s, a) = 0, \forall a \in \A, s \in \S$ holds, then by definition of operators $\Gamma$ and $\bar\Gamma$, the result follows.

\item[Only if part:] The operator $\bar\Gamma$, by definition, ensures that $\pi \in L$. Suppose for some $a~\in~\A(s), s~\in~\S$, we have $\bar\Gamma(\sqrt{\pi(s, a)} g_\pi(s, a)) = 0$ but $\sqrt{\pi(s, a)} g_\pi(s, a) \ne 0$. Then, $g_\pi(s, a) \ne 0$ and since $\pi \in L$, $1 \ge \pi(s, a) > 0$. We analyze this by considering the following two cases:\\
\begin{inparaenum}[\bfseries(i)]
\item $1 > \pi(s, a) > 0$ and $g_\pi(s, a) \ne 0$: In this case, it is possible to find a $\Delta > 0$ such that for all $\delta \le \Delta$,
\[1 > \pi(s, a) + \delta \sqrt{\pi(s, a)} g_\pi(s, a) > 0.\] This implies that \[\bar\Gamma\left (\sqrt{\pi(s, a)} g_\pi(s, a)\right ) = \sqrt{\pi(s, a)} g_\pi(s, a) \ne 0,\] which contradicts the initial supposition.\\
\item $\pi(s, a) = 1$ and $g_\pi(s, a) \ne 0$: Since $v_\pi$ is solution to the system of ODEs (\ref{eq:mdps:v-ode}), the following should hold: \[\sum\limits_{\hat{a} \in \A(s)} \pi(s, \hat{a}) g_\pi(s, \hat a) = \pi(s, a) g_\pi(s, a) = 0.\] This again leads to a contradiction.
\end{inparaenum}

\end{description}

The result follows.
\end{proof}

\section{Proofs for the function approximation variant} 
\label{sec:appendix-acopt-fa}
\paragraph{Proof of Theorem \ref{thm:fa}}
\begin{proof}
Due to timescale separation, we can assume that the policy parameter $\theta$ is constant for the sake of analysis of the critic recursion in \eqref{eq:fa}. 
For any fixed policy given as parameter $\theta$, the critic recursion in \eqref{eq:fa} converges to $w^\theta$, which is the TD fixed point (see Theorem \ref{thm:fa} statement for the explicit form of $w^\theta$). This is a standard claim for TD(0) with function approximation - see \cite{tsitsiklis1997analysis} for a detailed proof.

Let $\F_n=\sigma(\theta_m, m \le n)$. 
The actor recursion \eqref{eq:equiv} in the main paper  can be re-written as
\begin{align}
 \theta_{n + 1} =& \hat\Gamma \bigg ( \theta_n  
 + b(n)  \E[\pi_n(s_n, a_n)^{3/2} \psi_n(s_n, a_n) \bar g(s_n, a_n) \mid \F_n] \nonumber\\
&  + b(n)\left(\pi_n(s_n, a_n)^{3/2} \psi_n(s_n, a_n) g_n(s_n, a_n) -  \E[\pi_n(s_n, a_n)^{3/2} \psi_n(s_n, a_n) g_n(s_n, a_n) \mid \F_n]\right)\nonumber\\
&  + b(n)\E\left[\pi_n(s_n, a_n)^{3/2} \psi_n(s_n, a_n) \big(g_n(s_n, a_n) -  \bar g(s_n, a_n)\big) \mid \F_n\right] \label{eq:equiv}
 \bigg), 
\end{align} 
where $\bar g(s, a) :=  r(s, a) + \beta \sum_{s' \in \S} p(s'\mid s,a) {w^{\theta(t)}}\tr f(s') -  {w^{\theta(t)}}\tr f(s)$.

Since the critic converges, i.e., $w_n \rightarrow w^\theta$ a.s. as $n\rightarrow \infty$,  the last term in \eqref{eq:equiv} vanishes asymptotically.
Let $M_n = \sum_{m=0}^{n-1} \pi_m(s_m, a_m)^{3/2} \psi_m(s_m, a_m) g_m(s_m, a_m) -  \E[\pi_m(s_m, a_m)^{3/2} \psi_m(s_m, a_m) g_m(s_m, a_m) \mid \F_n]$. 
Using arguments similar to the proof of Theorem 2 in \cite{bhatnagar2009natural}, it can be seen that $M_n$ is a convergent martingale sequence that converges to zero.
So, that leaves out the first term multiplying $b(n)$ in \eqref{eq:equiv}. A simple calculation shows that 
\begin{align*}
 &\E[\pi_n(s_n, a_n)^{3/2} \psi_n(s_n, a_n) \bar g(s_n, a_n) \mid \F_n] \\
=& \sum_{s\in\S} d^{\pi^{\theta(t)}}(s) \sum_{a\in\A} \pi^{\theta(t)}(s,a) \nabla \pi^{\theta(t)}\big(r(s,a) + \beta \sum_{s' \in \S} p(s'\mid s,a) 
{w^{\theta(t)}}\tr f(s') -  {w^{\theta(t)}}\tr f(s)\big).  
\end{align*}
The rest of the proof amounts to showing that the RHS above is Lipschitz continuous and that the recursion \eqref{eq:equiv} is a $(T,\delta)$ perturbation of the ODE \eqref{eq:fa-ode} in the main paper. These facts can be verified in a similar manner as in the proof of Theorem 2 in \cite{bhatnagar2009natural} and the final claim follows from Hirsch lemma (see Lemma \ref{hirsch-lemma} above). 
\end{proof}

\section{Simulation Experiments}

\subsubsection*{Results for full state representation based algorithms on $44$ node graph}
Tables. \ref{tab:c}--\ref{tab:g} present detailed results for our AC-OPT algorithm and Q-learning, respectively on the $44$-node network graph setting. 
For Q-learning results in Table \ref{tab:g}, the action achieving the maximum in $\max_a Q(s, a)$) is boldened. It is evident that AC-OPT suggests the same (as well as optimal) actions as that of Q-learning, on almost all the states. 


\begin{table}[h]
	\centering
	\scriptsize
\begin{tabular}{| c | c | c | c | c | c |}
\hline
\textbf{Node no.} & \textbf{Value function} & \textbf{MPA: Probability} & \textbf{Node no.} & \textbf{Value function} & \textbf{MPA: Probability}\\
\hline
$0$ 		 & $-40.824$			 & $0$ : $0.974759$ &$22$		 & $-27.6105$			 & $0$ : $0.952729$ \\
$1$ 		 & $-39.7619$			 & $0$ : $0.940369$ &$23$		 & $-23.6213$			 & $1$ : $0.965307$ \\
$2$ 		 & $-38.3387$			 & $0$ : $0.954584$ &$24$		 & $-19.3607$			 & $1$ : $0.956485$ \\
$3$ 		 & $-37.1019$			 & $0$ : $0.934279$ &$25$		 & $-25.1828$			 & $1$ : $0.917481$ \\
$4$ 		 & $-35.8406$			 & $1$ : $0.977405$ &$26$		 & $-19.9879$			 & $1$ : $0.973978$ \\
$5$ 		 & $-37.5327$			 & $4$ : $0.775096$ &$27$		 & $-32.8828$			 & $0$ : $0.962421$ \\
$6$ 		 & $-35.618$			 & $3$ : $0.726475$ &$28$		 & $-30.5635$			 & $0$ : $0.963262$ \\
$7$ 		 & $-36.8312$			 & $0$ : $0.699411$ &$29$		 & $-28.1035$			 & $0$ : $0.935406$ \\
$8$ 		 & $-35.2874$			 & $3$ : $0.986148$ &$30$		 & $-25.5654$			 & $0$ : $0.951051$ \\
$9$ 		 & $-38.3211$			 & $0$ : $0.966336$ &$31$		 & $-22.8029$			 & $0$ : $0.965918$ \\
$10$ 		 & $-37.9592$			 & $0$ : $0.937302$ &$32$		 & $-18.8625$			 & $0$ : $0.955858$ \\
$11$ 		 & $-36.0614$			 & $0$ : $0.959576$ &$33$		 & $-14.5632$			 & $1$ : $0.929352$ \\
$12$ 		 & $-33.4332$			 & $0$ : $0.95668$ & $34$		 & $-10.0406$			 & $1$ : $0.9742$ \\
$13$ 		 & $-31.1697$			 & $0$ : $0.961255$ &$35$		 & $-16.8062$			 & $0$ : $0.928148$ \\
$14$ 		 & $-28.057$			 & $1$ : $0.95864$ & $36$		 & $-29.7862$			 & $0$ : $0.989813$ \\
$15$ 		 & $-30.1452$			 & $0$ : $0.951196$ &$37$		 & $-27.6444$			 & $0$ : $0.966042$ \\
$16$ 		 & $-28.4007$			 & $3$ : $0.940799$ &$38$		 & $-25.6189$			 & $0$ : $0.94836$ \\
$17$ 		 & $-30.4659$			 & $1$ : $0.863991$ &$39$		 & $-23.6847$			 & $0$ : $0.972548$ \\
$18$ 		 & $-38.2062$			 & $1$ : $0.937154$ &$40$		 & $-19.5683$			 & $0$ : $0.99494$ \\
$19$ 		 & $-35.7315$			 & $1$ : $0.94369$ & $41$		 & $-14.0438$			 & $0$ : $0.981092$ \\
$20$ 		 & $-33.0474$			 & $1$ : $0.930422$ &$42$		 & $-9.6131$			 & $0$ : $0.994136$ \\
$21$ 		 & $-30.2144$			 & $0$ : $0.941161$ &$43$		 & $-5.00005$			 & $0$ : $0.939764$ \\
\hline
\end{tabular}
\caption{Performance of the AC-OPT algorithm (MPA stands for ``most probable action'') on the $44$-node network graph}
\label{tab:c}
\end{table}

\begin{table}[h]
	\centering
	\scriptsize
	\begin{tabular}{| c | c | c | c | c | c | c | c | c |}
		\hline
		\textbf{ Node no.(s)}	& \textbf{Q(s, $0$ )}	 & \textbf{Q(s, $1$ )}	 & \textbf{Q(s, $2$ )}	 & \textbf{Q(s, $3$ )}	 & \textbf{Q(s, $4$ )}	 & \textbf{Q(s, $5$ )}	 & \textbf{Q(s, $6$ )}	 & \text{Q(s, $7$ )}\\
		\hline
		 $0$ 	 & 	$\bm{-39.7583}$ & 	$-41.4778$ & 	$-47.83$ & 	N.A & 	N.A & 	N.A & 	N.A & 	N.A  \\
		 $1$ 	 & $\bm{-38.6203}$& 	$-39.9753$ & 	$-46.4778$ & 	$-42.83$ & 	$-40.7824$ & 	N.A & 	N.A & 	N.A  \\
		 $2$ 	 & 	$\bm{-37.3559}$ & 	$-38.3059$ & 	$-44.9753$ & 	$-41.4778$ & 	$-39.7583$ & 	N.A & 	N.A & 	N.A \\
		 $3$ 	 & 	$\bm{-35.951}$ & 	$-36.451$ & 	$-43.3059$ & 	$-39.9753$ & 	$-38.6203$ & 	N.A & 	N.A & 	N.A \\
		 $4$ 	 & 	$-37.3559$ & 	$\bm{-34.39}$ & 	$-41.451$ & 	$-38.3059$ & 	$-37.3559$ & 	N.A & 	N.A & 	N.A \\
		 $5$ 	 & 	$-35.951$ & 	$-36.451$ & 	$-39.39$ & 	$-36.451$ & 	$\bm{-35.951}$ & 	N.A & 	N.A & 	N.A \\
		 $6$ 	 & 	$-37.3559$ & 	$-34.39$ & 	$-41.451$ & 	$\bm{-34.39}$ & 	$-37.3559$ & 	N.A & 	N.A & 	N.A \\
		 $7$ 	 & 	$\bm{-35.951}$ & 	$-36.451$ & 	$-39.39$ & 	$-36.451$ & 	$-35.951$ & 	N.A & 	N.A & 	N.A \\
		 $8$ 	 & 	$-41.451$ & 	$\bm{-34.39}$ & 	$-37.3559$ & 	N.A & 	N.A & 	N.A & 	N.A & 	N.A \\
		 $9$ 	 & 	$\bm{-36.4778}$ & 	$-38.5253$ & 	$-45.1728$  & 	$-50.7824$ & 	$-44.7583$ & 	N.A & 	N.A & 	N.A \\
		 $10$ 	 & 	$\bm{-34.9753}$ & 	$-36.6948$ & 	$-43.5253$ & 	$-40.1728$ & 	$-37.83$ & 	$-45.7824$ & 	$-49.7583$ & 	$-43.6203$ \\
		 $11$ 	 & 	$\bm{-33.3059}$ & 	$-34.6609$ & 	$-41.6948$ & 	$-38.5253$ & 	$-36.4778$ & 	$-44.7583$ & 	$-48.6203$ & 	$-42.3559$ \\
		 $12$ 	 & 	$\bm{-31.451}$ & 	$-32.401$ & 	$-39.6609$ & 	$-36.6948$ & 	$-34.9753$ & 	$-43.6203$ & 	$-47.3559$ & 	$-40.951$ \\
		 $13$ 	 & $\bm{-29.39}$ & 	$-29.89$ & 	$-37.401$ & 	$-34.6609$ & 	$-33.3059$ & 	$-42.3559$ & 	$-45.951$ & 	$-42.3559$ \\
		 $14$ 	 & 	$-31.451$ & 	$\bm{-27.1}$ & 	$-34.89$ & 	$-32.401$ & 	$-31.451$ & 	$-40.951$ & 	$-47.3559$ & 	$-40.951$ \\
		 $15$ 	 & 	$\bm{-29.39}$ & 	$-29.89$ & 	$-32.1$ & 	$-29.89$ & 	$-29.39$ & 	$-42.3559$ & 	$-45.951$ & 	$-42.3559$ \\
		 $16$ 	 & 	$-31.451$ & 	$-27.1$ & 	$-34.89$ & 	$\bm{-27.1}$ & 	$-31.451$ & 	$-40.951$ & 	$-47.3559$ & 	$-40.951$ \\
		 $17$ 	 & 	$-32.1$ & 	$-29.89$ & 	$\bm{-29.39}$ & 	$-42.3559$ & 	$-45.951$ & 	N.A & 	N.A & 	N.A \\
		 $18$ 	 & 	$\bm{-33.5253}$ & 	$-35.8681$ & 	$-42.7813$  & 	$-47.83$ & 	$-41.4778$ & 	N.A & 	N.A & 	N.A \\
		 $19$ 	 & 	$\bm{-31.6948}$ & 	$-33.7424$ & 	$-40.8681$ & 	$-37.7813$ & 	$-35.1728$ & 	$-42.83$ & 	$-46.4778$ & 	$-39.9753$ \\
		 $20$ 	 & 	$\bm{-29.6609}$ & 	$-31.3804$ & 	$-38.7424$ & 	$-35.8681$ & 	$-33.5253$ & 	$-41.4778$ & 	$-44.9753$ & 	$-38.3059$ \\
		 $21$ 	 & 	$\bm{-27.401}$ & 	$-28.756$ & 	$-36.3804$ & 	$-33.7424$ & 	$-31.6948$ & 	$-39.9753$ & 	$-43.3059$ & 	$-36.451$ \\
		 $22$ 	 & 	$\bm{-24.89}$ & 	$-25.84$ & 	$-33.756$ & 	$-31.3804$ & 	$-29.6609$ & 	$-38.3059$ & 	$-41.451$ & 	$-34.39$ \\
		 $23$ 	 & 	$\bm{-22.1}$ & 	$-22.6$ & 	$-30.84$ & 	$-28.756$ & 	$-27.401$ & 	$-36.451$ & 	$-39.39$ & 	$-36.451$ \\
		 $24$ 	 & 	$-24.89$ & 	\textbf{- $19$ } & 	$-27.6$ & 	$-25.84$ & 	$-24.89$ & 	$-34.39$ & 	$-41.451$ & 	$-34.39$ \\
		 $25$ 	 & 	$\bm{-22.1}$ & 	$-22.6$ & 	- $24$  & 	$-22.6$ & 	$-22.1$ & 	$-36.451$ & 	$-39.39$ & 	$-36.451$ \\
		 $26$ 	 & 	$-27.6$ & 	\textbf{- $19$ } & 	$-24.89$ & 	$-34.39$ & 	$-41.451$ & 	N.A  & 	N.A & 	N.A\\
		 $27$ 	 & 	$\bm{-30.8681}$ & 	$-33.4766$ & 	$-40.629$  & 	$-45.1728$ & 	$-38.5253$ & 	N.A & 	N.A & 	N.A\\
		 $28$ 	 & 	$\bm{-28.7424}$ & 	$-31.0852$ & 	$-38.4766$ & 	$-35.629$ & 	$-32.7813$ & 	$-40.1728$ & 	$-43.5253$ & 	$-36.6948$ \\
		 $29$ 	 & 	$\bm{-26.3804}$ & 	$-28.4279$ & 	$-36.0852$ & 	$-33.4766$ & 	$-30.8681$ & 	$-38.5253$ & 	$-41.6948$ & 	$-34.6609$ \\
		 $30$ 	 & 	$\bm{-23.756}$ & 	$-25.4755$ & 	$-33.4279$ & 	$-31.0852$ & 	$-28.7424$ & 	$-36.6948$ & 	$-39.6609$ & 	$-32.401$ \\
		 $31$ 	 & 	$\bm{-20.84}$ & 	$-22.195$ & 	$-30.4755$ & 	$-28.4279$ & 	$-26.3804$ & 	$-34.6609$ & 	$-37.401$ & 	$-29.89$ \\
		 $32$ 	 & 	$\bm{-17.6}$ & 	$-18.55$ & 	$-27.195$ & 	$-25.4755$ & 	$-23.756$ & 	$-32.401$ & 	$-34.89$ & 	$-27.1$ \\
		 $33$ 	 & 	\textbf{- $14$ } & 	$-14.5$ & 	$-23.55$ & 	$-22.195$ & 	$-20.84$ & 	$-29.89$ & 	$-32.1$ & 	$-29.89$ \\
		 $34$ 	 & 	$-17.6$ & 	\textbf{- $10$ } & 	$-19.5$ & 	$-18.55$ & 	$-17.6$ & 	$-27.1$ & 	$-34.89$ & 	$-27.1$ \\
		 $35$ 	 & 	- $15$  & 	$-14.5$ & 	\textbf{- $14$ } & 	$-29.89$ & 	$-32.1$ & 	N.A   & 	N.A & 	N.A\\
		 $36$ 	 & 	$\bm{-28.4766}$  & 	$-42.7813$ & 	$-35.8681$ & 	N.A & 	N.A & 	N.A & 	N.A & 	N.A\\
		 $37$ 	 & 	$\bm{-26.0852}$  & 	$-30.629$ & 	$-37.7813$ & 	$-40.8681$ & 	$-33.7424$ & 	N.A & 	N.A & 	N.A \\
		 $38$ 	 & 	$\bm{-23.4279}$  & 	$-28.4766$ & 	$-35.8681$ & 	$-38.7424$ & 	$-31.3804$ & 	N.A & 	N.A & 	N.A \\
		 $39$ 	 & 	$\bm{-20.4755}$  & 	$-26.0852$ & 	$-33.7424$ & 	$-36.3804$ & 	$-28.756$ & 	N.A & 	N.A & 	N.A \\
		 $40$ 	 & 	$\bm{-17.195}$  & 	$-23.4279$ & 	$-31.3804$ & 	$-33.756$ & 	$-25.84$ & 	N.A & 	N.A & 	N.A\\
		 $41$ 	 & 	$\bm{-13.55}$  & 	$-20.4755$ & 	$-28.756$ & 	$-30.84$ & 	$-22.6$ & 	N.A & 	N.A & 	N.A \\
		 $42$ 	 & 	$\bm{-9.5}$  & 	$-17.195$ & 	$-25.84$ & 	$-27.6$ & 	- $19$  & 	N.A & 	N.A & 	N.A\\
		 $43$ 	 & 	\textbf{- $5$ } & 	$-13.55$ & 	$-22.6$ & 	- $24$  & 	$-22.6$ & 	N.A & 	N.A & 	N.A  \\
		
		\hline
	\end{tabular}
	\caption{Performance of  Q-learning algorithm on the $44$-node network graph}
	\label{tab:g}
\end{table}
\subsubsection*{Results for function approximation based algorithms}
Tables.~\ref{tab:e} -- \ref{tab:f} present the detailed results for the function approximation based algorithms: RPAFA-2 from \cite{abdulla2007reinforcement} and our AC-OPT-FA. States that are shown in bold in these tables  correspond to those where the respective algorithm recommended a sub-optimal action. It is evident that AC-OPT-FA results in $93\%$ accuracy, i.e., on $93\%$ of the state space, AC-OPT-FA recommended the optimal action with high probability (around $0.9$ in almost all states). On the other hand, RPAFA-2 achieved only $50\%$ accuracy.

\begin{table}[h]
\centering
\scriptsize
\begin{tabular}{| c | c | c | c | c | c |}
\hline
\textbf{Node} & \textbf{Value function} & \textbf{MPA: Probability} & \textbf{Node} & \textbf{Value function} & \textbf{MPA: Probability}\\
\hline
 $0$ 		 & $-52.8351$			 &  $1$  : $0.975949$ &         $22$ 		 & $-28.674$			 &  $1$  : $0.96989$ \\ 
 $1$ 		 & $-50.4398$			 &  $1$  : $0.969893$ &         $23$ 		 & $-26.2787$			 &  $1$  : $0.969891$ \\ 
 $2$ 		 & $-48.0445$			 &  $1$  : $0.969893$ &         $24$ 		 & $-23.8834$			 &  $1$  : $0.96989$ \\
 $3$ 		 & $-45.6493$			 &  $1$  : $0.969893$ &         $25$ 		 & $-21.4882$			 &  $1$  : $0.96989$ \\
 $4$ 		 & $-43.254$			 &  $1$  : $0.969893$ &        $\bm{26}$		 & $-19.0929$			 &  $0$  : $0.513957$ \\ 
 $5$ 		 & $-40.8587$			 &  $1$  : $0.969893$ &         $27$ 		 & $-30.965$			 &  $1$  : $0.975946$ \\
 $6$ 		 & $-38.4635$			 &  $1$  : $0.969893$ &         $28$ 		 & $-28.5698$			 &  $1$  : $0.96989$ \\
 $7$ 		 & $-36.0682$			 &  $1$  : $0.969893$ &         $29$ 		 & $-26.1745$			 &  $1$  : $0.96989$ \\
$\bm{8}$		 & $-33.6729$			 &  $0$  : $0.513958$ &     $30$ 		 & $-23.7792$			 &  $1$  : $0.969891$ \\ 
 $9$ 		 & $-45.545$			 &  $1$  : $0.975946$ &         $31$ 		 & $-21.384$			 &  $1$  : $0.96989$ \\
 $10$ 		 & $-43.1498$			 &  $1$  : $0.96989$ &          $32$ 		 & $-18.9887$			 &  $1$  : $0.96989$ \\
 $11$ 		 & $-40.7545$			 &  $1$  : $0.96989$ &          $33$ 		 & $-16.5934$			 &  $1$  : $0.96989$ \\
 $12$ 		 & $-38.3592$			 &  $1$  : $0.96989$ &          $34$ 		 & $-14.1982$			 &  $1$  : $0.969891$ \\ 
 $13$ 		 & $-35.964$			 &  $1$  : $0.969891$ &         $35$ 		 & $-11.8029$			 &  $0$  : $0.513957$ \\
 $14$ 		 & $-33.5687$			 &  $1$  : $0.96989$ &          $36$ 		 & $-23.675$			 &  $0$  : $0.999869$ \\
 $15$ 		 & $-31.1734$			 &  $1$  : $0.96989$ &          $37$ 		 & $-21.2797$			 &  $0$  : $0.993623$ \\
 $16$ 		 & $-28.7782$			 &  $1$  : $0.96989$ &          $38$ 		 & $-18.8845$			 &  $0$  : $0.993624$ \\
$\bm{17}$		 & $-26.3829$			 &  $0$  : $0.513957$ &     $39$ 		 & $-16.4892$			 &  $0$  : $0.993624$ \\
 $18$ 		 & $-38.255$			 &  $1$  : $0.975946$ &         $40$ 		 & $-14.0939$			 &  $0$  : $0.993623$ \\
 $19$ 		 & $-35.8598$			 &  $1$  : $0.96989$ &          $41$ 		 & $-11.6987$			 &  $0$  : $0.993623$ \\
 $20$ 		 & $-33.4645$			 &  $1$  : $0.969891$ &         $42$ 		 & $-9.30341$			 &  $0$  : $0.993624$ \\
 $21$ 		 & $-31.0692$			 &  $1$  : $0.96989$ &          $43$ 		 & $-6.90814$			 &  $0$  : $0.993624$ \\
\hline
\end{tabular}
\caption{Performance of the function approximation variant AC-OPT-FA on the $44$-node network graph}
\label{tab:e}
\end{table}

\begin{table}[h]
	\centering
	\scriptsize
\begin{tabular}{| c | c | c | c |}
\hline
\textbf{Node}  & \textbf{MPA: Probability} & \textbf{Node}  & \textbf{MPA: Probability}\\
\hline
 $0$ 			 &  $1$  : $0.504191$ &              $22$ 			 &  $0$  : $0.984263$ \\ 
 $1$ 			 &  $2$  : $0.330269$ &             $\bm{23}$			 &  $2$  : $0.497062$ \\
$\bm{2}$			 &  $1$  : $0.496113$ &          $24$ 			 &  $1$  : $0.49855$ \\
 $3$ 			 &  $0$  : $0.330723$ &             $\bm{25}$			 &  $4$  : $0.996063$ \\ 
$\bm{4}$			 &  $3$  : $0.331711$ &          $26$ 			 &  $1$  : $0.499916$ \\
 $5$ 			 &  $3$  : $0.50029$ &               $27$ 			 &  $0$  : $0.329259$ \\
$\bm{6}$			 &  $2$  : $0.332378$ &         $\bm{28}$			 &  $2$  : $0.249082$ \\
$\bm{7}$			 &  $2$  : $0.498791$ &         $\bm{29}$			 &  $6$  : $0.255686$ \\
$\bm{8}$			 &  $2$  : $0.499996$ &         $\bm{30}$			 &  $2$  : $0.25075$ \\
$\bm{9}$			 &  $3$  : $0.330108$ &         $\bm{31}$			 &  $3$  : $0.500413$ \\ 
 $10$ 			 &  $1$  : $0.201589$ &             $\bm{32}$			 &  $2$  : $0.249539$ \\
$\bm{11}$			 &  $3$  : $0.491524$ &          $33$ 			 &  $1$  : $0.20215$ \\
$\bm{12}$			 &  $2$  : $0.249318$ &          $34$ 			 &  $1$  : $0.249613$ \\ 
$\bm{13}$			 &  $6$  : $0.253784$ &          $35$ 			 &  $0$  : $0.999038$ \\
 $14$ 			 &  $1$  : $0.249081$ &              $36$ 			 &  $0$  : $0.969508$ \\
 $15$ 			 &  $1$  : $0.249349$ &              $37$ 			 &  $0$  : $0.978052$ \\
 $16$ 			 &  $3$  : $0.249717$ &              $38$ 			 &  $0$  : $0.330178$ \\
$\bm{17}$			 &  $3$  : $0.33322$ &          $\bm{39}$			 &  $1$  : $0.336035$ \\
$\bm{18}$			 &  $3$  : $0.330103$ &          $40$ 			 &  $0$  : $0.996688$ \\
 $19$ 			 &  $0$  : $0.20268$ &               $41$ 			 &  $0$  : $0.989921$ \\
 $20$ 			 &  $0$  : $0.202288$ &             $\bm{42}$			 &  $3$  : $0.498579$ \\
$\bm{21}$			 &  $7$  : $0.33527$ &          $\bm{43}$			 &  $1$  : $0.49913$ \\
\hline
\end{tabular}
\caption{Performance of RPAFA-2 algorithm from \cite{abdulla2007reinforcement} on the $44$-node network graph}
\label{tab:f}
\end{table}

\clearpage\newpage
\bibliographystyle{plainnat}
\bibliography{references}

\end{document}